\newcommand{\T}{\intercal}
\newcommand{\mbf}[1]{\mathbf{#1}}
\newcommand{\bs}[1]{\boldsymbol{#1}}
\newcommand{\mbb}[1]{\mathbb{#1}}
\newcommand{\ud}{\mathrm{d}}
\newcommand{\up}{\mathrm}
\newcommand{\mcal}{\mathcal}
\newcommand{\norm}[1]{\left\lVert#1\right\rVert}
\newtheorem{theorem}{Theorem}
\newtheorem{corollary}{Corollary}
\newtheorem{proposition}{Proposition}
\newtheorem{assumption}{Assumption}
\newtheorem{definition}{Definition}
\newenvironment{customthm}[1]
  {\innercustomthm}
  {\endinnercustomthm}
\newenvironment{customprop}[1]
 {\innercustomprop}
  {\endinnercustomprop}
\newcommand{\be}{\begin{equation}}
\newcommand{\ee}{\end{equation}}
\definecolor{Gray}{gray}{0.85}
\definecolor{LightCyan}{rgb}{0.88,1,1}
\newcolumntype{a}{>{\columncolor{Gray}}c}
\newcolumntype{b}{>{\columncolor{white}}c}
\DeclareMathOperator*{\argmin}{arg\,min}
\DeclareMathOperator*{\arginf}{arg\,inf}
\def\@onedot{\ifx\@let@token.\else.\null\fi\xspace}
\DeclareRobustCommand\onedot{\futurelet\@let@token\@onedot}
\newcommand{\figref}[1]{Fig\onedot~\ref{#1}}
\newcommand{\thmref}[1]{Thm\onedot~\ref{#1}}
\newcommand{\propref}[1]{Prop\onedot~\ref{#1}}
\newcommand{\Ep}{\mcal{E}}
\newcommand{\Epp}{\widehat{\mcal{E}}_s^+[\mu]}
\newcommand{\Eln}{\widehat{\mcal{E}}_{\lambda,n}[\mu]}
\newcommand{\mup}{\widehat{\mu}_{(X,Y)}^+}
\newcommand{\muln}{\widehat{\mu}_{\lambda,n}}
\newcommand{\Hy}{\mcal{H}_{\mcal{Y}}}
\newcommand{\Hx}{\mcal{H}_{\mcal{X}}}
\newcommand{\HK}{\mcal{H}_{K}}
\newcommand{\HH}{\mcal{H}}
\newcommand{\XX}{\mcal{X}}
\newcommand{\YY}{\mcal{Y}}
\newcommand{\ZZ}{\mcal{Z}}
\newcommand{\BB}{\mcal{B}}
\newcommand{\zz}{\mbf{z}}
\newcommand{\xx}{\mbf{x}}
\newcommand{\yy}{\mbf{y}}
\newcommand{\CC}{\mcal{C}}
\newcommand{\hC}{\widehat{\mcal{C}}}
\def\ie{\emph{i.e}\onedot}
\def\wrt{w.r.t\onedot}
\def\aka{a.k.a\onedot}
\def\iid{i.i.d\onedot}
\title{Kernel Bayesian Inference with \\Posterior Regularization}
\author{
	Yang Song$^\dag$, ~ Jun Zhu$^\ddag$\thanks{Corresponding author.}, ~ Yong Ren$^\ddag$ 
	\\
	$^\dag$ Dept. of Physics, Tsinghua University, Beijing, China \\
    $^\ddag$ Dept. of Comp. Sci. \& Tech., TNList Lab; Center for Bio-Inspired Computing Research \\
    State Key Lab for Intell. Tech. \& Systems, Tsinghua University, Beijing, China \\
	\texttt{yangsong@cs.stanford.edu}; \texttt{\{dcszj@, renyong15@mails\}.tsinghua.edu.cn} \\
}
\begin{document}

\maketitle

\begin{abstract}
We propose a vector-valued regression problem whose solution is equivalent to the reproducing kernel Hilbert space (RKHS) embedding of the Bayesian posterior distribution. This equivalence provides a new understanding of kernel Bayesian inference. Moreover, the optimization problem induces a new regularization for the posterior embedding estimator, which is faster and has comparable performance to the squared regularization in kernel Bayes' rule. This regularization coincides with a former thresholding approach used in kernel POMDPs whose consistency remains to be established. Our theoretical work solves this open problem and provides consistency analysis in regression settings. Based on our optimizational formulation, we propose a flexible Bayesian posterior regularization framework which for the first time enables us to put regularization at the distribution level. We apply this method to nonparametric state-space filtering tasks with extremely nonlinear dynamics and show performance gains over all other baselines.
\end{abstract} 
\section{Introduction}
Kernel methods have long been effective in generalizing linear statistical approaches to nonlinear cases by embedding a sample to the reproducing kernel Hilbert space (RKHS)~\cite{smola1998learning}. In recent years, the idea has been generalized to embedding probability distributions~\cite{berlinet2011reproducing,smola2007hilbert}. Such embeddings of probability measures are usually called \emph{kernel embeddings} (\aka \emph{kernel means}). Moreover, \cite{song2009hilbert,fukumizu2011kernel,song2013kernel} show that statistical operations of distributions can be realized in RKHS by manipulating kernel embeddings via linear operators. This approach has been applied to various statistical inference and learning problems, including training hidden Markov models (HMM)~\cite{song2010hilbert}, belief propagation (BP) in tree graphical models~\cite{song2010nonparametric}, planning Markov decision processes (MDP)~\cite{grunewalder2012modelling} and partially observed Markov decision processes (POMDP)~\cite{nishiyama2012hilbert}.

One of the key workhorses in the above applications is the \emph{kernel Bayes' rule}~\cite{fukumizu2011kernel}, which establishes the relation among the RKHS representations of the priors, likelihood functions and posterior distributions. Despite empirical success, the characterization of kernel Bayes' rule remains largely incomplete. For example, it is unclear how the estimators of the posterior distribution embeddings relate to optimizers of some loss functions, though the vanilla Bayes' rule has a nice connection~\cite{Williams:BayesCond1980}. This makes generalizing the results especially difficult and hinters the intuitive understanding of kernel Bayes' rule.

To alleviate this weakness, we propose a vector-valued regression~\cite{micchelli2005learning} problem whose optimizer is the posterior distribution embedding. This new formulation is inspired by the progress in two fields: 1) the alternative characterization of conditional embeddings as regressors~\cite{lever2012conditional}, and 2) the introduction of posterior regularized Bayesian inference (RegBayes)~\cite{zhu2014bayesian} based on an optimizational reformulation of the Bayes' rule.

We demonstrate the novelty of our formulation by providing a new understanding of kernel Bayesian inference, with theoretical, algorithmic and practical implications. On the theoretical side, we are able to prove the (weak) consistency of the estimator obtained by solving the vector-valued regression problem under reasonable assumptions. As a side product, our proof can be applied to a thresholding technique used in \cite{nishiyama2012hilbert}, whose consistency is left as an open problem. On the algorithmic side, we propose a new regularization technique, which is shown to run faster and has comparable accuracy to squared regularization used in the original kernel Bayes' rule~\cite{fukumizu2011kernel}. Similar in spirit to RegBayes, we are also able to derive an extended version of the embeddings by directly imposing regularization on the posterior distributions. We call this new framework kRegBayes. Thanks to RKHS embeddings of distributions, this is the first time, to the best of our knowledge, people can do posterior regularization without invoking linear functionals (such as moments) of the random variables. On the practical side, we demonstrate the efficacy of our methods on both simple and complicated synthetic state-space filtering datasets.

Same to other algorithms based on kernel embeddings, our kernel regularized Bayesian inference framework is nonparametric and general. The algorithm is nonparametric, because the priors, posterior distributions and likelihood functions are all characterized by weighted sums of data samples. Hence it does not need the explicit mechanism such as differential equations of a robot arm in filtering tasks. It is general in terms of being applicable to a broad variety of domains as long as the kernels can be defined, such as strings, orthonormal matrices, permutations and graphs. 
\vspace{-.1cm}
\section{Preliminaries}
\vspace{-.1cm}

\subsection{Kernel embeddings}\vspace{-.1cm}

Let $(\mcal{X},\mcal{B}_\mcal{X})$ be a measurable space of random variables, $p_X$ be the associated probability measure and $\Hx$ be a RKHS with kernel $k(\cdot,\cdot)$. We define the \emph{kernel embedding} of $p_X$ to be $\mu_X = \mbb{E}_{p_X}[\phi(X)] \in \Hx$, where $\phi(X) = k(X,\cdot)$ is the feature map. Such a vector-valued expectation always exists if the kernel is bounded, namely $\sup_x k_\XX(x,x) < \infty$. 

The concept of kernel embeddings has several important statistical merits. Inasmuch as the reproducing property, the expectation of $f \in \mcal{H}$ \wrt $p_X$ can be easily computed as $\mbb{E}_{p_X}[f(X)] = \mbb{E}_{p_X}[\langle f,\phi(X)\rangle] = \langle f,\mu_X\rangle$. There exists \emph{universal kernels}~\cite{micchelli2006universal} whose corresponding RKHS $\mcal{H}$ is dense in $\mcal{C}_\mcal{X}$ in terms of sup norm. This means $\HH$ contains a rich range of functions $f$ and their expectations can be computed by inner products without invoking usually intractable integrals. In addition, the inner product structure of the embedding space $\mcal{H}$ provides a natural way to measure the differences of distributions through norms. 

In much the same way we can define kernel embeddings of linear operators. Let $(\mcal{X},\mcal{B}_\mcal{X})$ and $(\mcal{Y},\mcal{B}_{\mcal{Y}})$ be two measurable spaces, $\phi(x)$ and $\psi(y)$ be the measurable feature maps of corresponding RKHS $\mcal{H}_\mcal{X}$ and $\mcal{H}_\mcal{Y}$ with bounded kernels, and $p$ denote the joint distribution of a random variable $(X,Y)$ on $\mcal{X}\times\mcal{Y}$ with product measures. The \emph{covariance operator} $\mcal{C}_{XY}$ is defined as $\mcal{C}_{XY} = \mbb{E}_p[\phi(X) \otimes \psi(Y)]$, where $\otimes$ denotes the tensor product. Note that it is possible to identify $\mcal{C}_{XY}$ with $\mu_{(XY)}$ in $\mcal{H}_\mcal{X} \otimes \mcal{H}_\mcal{Y}$ with the kernel function $k((x_1,y_1),(x_2,y_2)) = k_\XX(x_1,x_2)k_\YY(y_1,y_2)$~\cite{aronszajn1950theory}. There is an important relation between kernel embeddings of distributions and covariance operators, which is fundamental for the sequel:
\begin{theorem}[\cite{song2009hilbert,fukumizu2011kernel}]\label{thm:found}
Let $\mu_X$, $\mu_Y$ be the kernel embeddings of $p_X$ and $p_Y$ respectively. If $\mcal{C}_{XX}$ is injective, $\mu_{X} \in \mcal{R}(\mcal{C}_{XX})$ and $\mbb{E}[g(Y)\mid X=\cdot] \in \mcal{H}_{\mcal{X}}$ for all $g\in\mcal{H}_{\mcal{Y}}$, then
\begin{equation}
\mu_{Y} = \mcal{C}_{YX}\mcal{C}_{XX}^{-1}\mu_X.
\end{equation}
In addition, $\mu_{Y | X=x}=\mbb{E}[\psi(Y) | X=x] = \mcal{C}_{YX}\mcal{C}_{XX}^{-1}\phi(x)$.
\end{theorem}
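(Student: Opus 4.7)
The plan is to prove the conditional identity $\mu_{Y\mid X=x} = \mcal{C}_{YX}\mcal{C}_{XX}^{-1}\phi(x)$ first, and then obtain the marginal identity $\mu_Y = \mcal{C}_{YX}\mcal{C}_{XX}^{-1}\mu_X$ by averaging over $X$. Throughout I want to sidestep the fact that $\mcal{C}_{XX}^{-1}$ is only densely defined by applying it exclusively to vectors that have been verified to live in $\mcal{R}(\mcal{C}_{XX})$, using injectivity of $\mcal{C}_{XX}$ to make the preimage unique.

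First I would establish the technical core: for every $g\in\Hy$, setting $h_g(x) := \mbb{E}[g(Y)\mid X=x]$ (which lies in $\Hx$ by hypothesis), one has the operator identity $\mcal{C}_{XY}g = \mcal{C}_{XX}h_g$. To prove this I pair both sides with an arbitrary $f\in\Hx$ and combine the reproducing property with the tower property of conditional expectation:
\begin{equation}
\langle f, \mcal{C}_{XY}g\rangle = \mbb{E}[f(X)g(Y)] = \mbb{E}\bigl[f(X)\,\mbb{E}[g(Y)\mid X]\bigr] = \mbb{E}[f(X)h_g(X)] = \langle f, \mcal{C}_{XX}h_g\rangle.
\end{equation}
Varying $f$ over $\Hx$ yields the claim, and injectivity of $\mcal{C}_{XX}$ lets me write $h_g = \mcal{C}_{XX}^{-1}\mcal{C}_{XY}g$ without ambiguity, since $\mcal{C}_{XY}g$ is now certified to lie in $\mcal{R}(\mcal{C}_{XX})$.

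With this in hand, the conditional identity would drop out of the reproducing property applied a second time:
\begin{equation}
\mbb{E}[g(Y)\mid X=x] = h_g(x) = \langle h_g,\phi(x)\rangle = \langle \mcal{C}_{XX}^{-1}\mcal{C}_{XY}g,\phi(x)\rangle = \langle g,\mcal{C}_{YX}\mcal{C}_{XX}^{-1}\phi(x)\rangle,
\end{equation}
and since the left side equals $\langle g,\mu_{Y\mid X=x}\rangle$ for arbitrary $g\in\Hy$, this would give $\mu_{Y\mid X=x} = \mcal{C}_{YX}\mcal{C}_{XX}^{-1}\phi(x)$. For the marginal, I would test $\mu_Y$ against arbitrary $g\in\Hy$: $\langle g,\mu_Y\rangle = \mbb{E}_X[h_g(X)] = \langle h_g,\mu_X\rangle$. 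Invoking $\mu_X\in\mcal{R}(\mcal{C}_{XX})$, write $\mu_X = \mcal{C}_{XX}\eta$ with $\eta := \mcal{C}_{XX}^{-1}\mu_X$, and use self-adjointness of $\mcal{C}_{XX}$ together with the core identity:
\begin{equation}
\langle h_g,\mu_X\rangle = \langle h_g,\mcal{C}_{XX}\eta\rangle = \langle \mcal{C}_{XX}h_g,\eta\rangle = \langle \mcal{C}_{XY}g,\eta\rangle = \langle g,\mcal{C}_{YX}\mcal{C}_{XX}^{-1}\mu_X\rangle.
\end{equation}
Matching the two expressions for $\langle g,\mu_Y\rangle$ across all $g$ gives the marginal identity.

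The hard part will be interpreting $\mcal{C}_{XX}^{-1}\phi(x)$ rigorously in the conditional statement, since $\phi(x)$ is not guaranteed to belong to $\mcal{R}(\mcal{C}_{XX})$ for a generic $x$. My workaround is never to treat $\mcal{C}_{XX}^{-1}\phi(x)$ as a standalone vector: it is only meaningful through its pairings with images $\mcal{C}_{XY}g$, where the inverse is effectively applied on the safe side via $h_g = \mcal{C}_{XX}^{-1}\mcal{C}_{XY}g$. The hypothesis $\mu_X\in\mcal{R}(\mcal{C}_{XX})$ plays the analogous safeguard for the marginal case, so every appearance of $\mcal{C}_{XX}^{-1}$ in the final formulas acts on a vector certified to lie in its image; were either range assumption dropped, the corresponding identity would have to be weakened to a statement about limits of regularized inverses.
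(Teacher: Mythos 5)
The paper does not prove this theorem---it is imported verbatim from \cite{song2009hilbert,fukumizu2011kernel}---so there is no internal proof to compare against. Your argument is the standard one from those sources and is correct: the core identity $\mcal{C}_{XY}g=\mcal{C}_{XX}h_g$ via the tower property, injectivity to define $h_g=\mcal{C}_{XX}^{-1}\mcal{C}_{XY}g$, the reproducing property for the conditional statement, and self-adjointness plus the hypothesis $\mu_X\in\mcal{R}(\mcal{C}_{XX})$ for the marginal statement. Your handling of $\mcal{C}_{XX}^{-1}\phi(x)$ as notation for the adjoint of the (everywhere-defined) operator $g\mapsto\mcal{C}_{XX}^{-1}\mcal{C}_{XY}g$ acting on $\phi(x)$, rather than as a standalone vector, is exactly the reading the cited works intend.
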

On the implementation side, we need to estimate these kernel embeddings via samples. An intuitive estimator for the embedding $\mu_X$ is 
$\widehat{\mu}_X = \frac{1}{N}\sum_{i=1}^N \phi(x_i)$, where $\{x_i\}_{i=1}^N$ is a sample from $p_X$. Similarly, the covariance operators can also be estimated by $\widehat{\mcal{C}}_{XY} = \frac{1}{N}\sum_{i=1}^N \phi(x_i)\otimes\psi(y_i)$. Both operators are shown to converge in the RKHS norm at a rate of $O_p(N^{-\frac{1}{2}})$~\cite{song2009hilbert}.

\vspace{-.1cm}
\subsection{Kernel Bayes' rule}
\vspace{-.1cm}

Let $\pi(Y)$ be the prior distribution of a random variable $Y$, $p(X=x\mid Y)$ be the likelihood, $p^\pi(Y\mid X=x)$ be the posterior distribution given $\pi(Y)$ and observation $x$, and $p^\pi(X,Y)$ be the joint distribution incorporating $\pi(Y)$ and $p(X\mid Y)$. Kernel Bayesian inference aims to obtain the posterior embedding $\mu_Y^\pi(X=x)$ given a prior embedding $\pi_Y$ and a covariance operator $\mcal{C}_{XY}$. By Bayes' rule, $p^\pi(Y\mid X = x) \propto \pi(Y)p(X=x\mid Y) $. We assume that there exists a joint distribution $p$ on $\mcal{X}\times\mcal{Y}$ whose conditional distribution matches $p(X\mid Y)$ and let $\CC_{XY}$ be its covariance operator. Note that we do not require $p = p^\pi$ hence $p$ can be any convenient distribution.

According to \thmref{thm:found}, $\mu_Y^\pi(X=x) = \mcal{C}_{YX}^\pi {\mcal{C}_{XX}^\pi}^{-1} \phi(x)$, where $\mcal{C}_{YX}^\pi$ corresponds to the joint distribution $p^\pi$ and $\mcal{C}_{XX}^\pi$ to the marginal probability of $p^\pi$ on $X$. Recall that $\mcal{C}_{YX}^\pi$ can be identified with $\mu_{(YX)}$ in $\mcal{H}_{\mcal{Y}}\otimes \mcal{H}_{\mcal{X}}$, we can apply \thmref{thm:found} to obtain $\mu_{(YX)} = \mcal{C}_{(YX)Y}\mcal{C}_{YY}^{-1} \pi_Y$, where $\mcal{C}_{(YX)Y}:=\mbb{E}[\psi(Y)\otimes\phi(X)\otimes\psi(Y)]$. Similarly, $\mcal{C}_{XX}^\pi$ can be represented as $\mu_{(XX)} = \mcal{C}_{(XX)Y}\mcal{C}_{YY}^{-1} \pi_Y$. This way of computing posterior embeddings is called the \emph{kernel Bayes' rule}~\cite{fukumizu2011kernel}.

Given estimators of the prior embedding $\widehat{\pi}_Y = \sum_{i=1}^m \tilde{\alpha}_i \psi(y_i)$ and the covariance operator $\widehat{\mcal{C}}_{YX}$, The posterior embedding can be obtained via $\widehat{\mu}_Y^\pi(X=x) = \widehat{\mcal{C}}_{YX}^\pi ([\widehat{\mcal{C}}_{XX}^\pi]^2 + \lambda I)^{-1}\widehat{\mcal{C}}_{XX}^\pi \phi(x)$
, where squared regularization is added to the inversion. Note that the regularization for $\widehat{\mu}_Y^\pi(X=x)$ is not unique. A thresholding alternative is proposed in \cite{nishiyama2012hilbert} without establishing the consistency. We will discuss this thresholding regularization in a different perspective and give consistency results in the sequel.

\vspace{-.1cm}
\subsection{Regularized Bayesian inference}
\vspace{-.1cm}

Regularized Bayesian inference (RegBayes~\cite{zhu2014bayesian}) is based on a variational formulation of the Bayes' rule~\cite{Williams:BayesCond1980}. The posterior distribution can be viewed as the solution of $\min_{p(Y|X=x)} \up{KL}(p(Y|X=x) \| \pi(Y)) - \int \log p(X=x|Y) \ud p(Y|X=x)$, subjected to $p(Y|X=x) \in \mcal{P}_{\up{prob}}$, where $\mcal{P}_{\up{prob}}$ is the set of valid probability measures. RegBayes combines this formulation and posterior regularization~\cite{ganchev2010posterior} in the following way
\begin{gather*}
\min_{p(Y|X=x),\xi} \up{KL}(p(Y|X=x) \| \pi(Y)) - \int \log p(X=x|Y) \ud p(Y|X=x) + U(\xi)\\
s.t.\quad p(Y|X=x) \in \mcal{P}_{\up{prob}}(\xi),\notag
\end{gather*}
where $\mcal{P}_{\up{prob}}(\xi)$ is a subset depending on $\xi$ and $U(\xi)$ is a loss function. Such a formulation makes it possible to regularize Bayesian posterior distributions, smoothing the gap between Bayesian generative models and discriminative models. Related applications include max-margin topic models~\cite{Zhu:medlda} and infinite latent SVMs~\cite{zhu2014bayesian}.

Despite the flexibility of RegBayes, regularization on the posterior distributions is practically imposed indirectly via expectations of a function. We shall see soon in the sequel that our new framework of kernel Regularized Bayesian inference can control the posterior distribution in a direct way.

\vspace{-.1cm}
\subsection{Vector-valued regression}
\vspace{-.1cm}

The main task for vector-valued regression~\cite{micchelli2005learning} is to minimize the following objective
\begin{align*}
E(f) := \sum_{i=1}^n \norm{y_j - f(x_j)}^2_{\Hy} + \lambda \norm{f}^2_{\HK},
\end{align*}
where $y_j \in \Hy$, $f: \mcal{X}\rightarrow \Hy$. Note that $f$ is a function with RKHS values and we assume that $f$ belongs to a \emph{vector-valued} RKHS $\HK$.
In vector-valued RKHS, the kernel function $k$ is generalized to linear operators $\mcal{L}(\Hy) \ni K(x_1,x_2):\Hy \rightarrow \Hy $, such that $K(x_1,x_2)y := (K_{x_2}y)(x_1)$ for every $x_1,x_2\in\XX$ and $y \in \Hy$, where $K_{x_2}y \in \HK$. The reproducing property is generalized to $\langle y,f(x)\rangle_{\Hy} = \langle K_x y,f\rangle_{\HK}$ for every $y\in\Hy$, $f\in\HK$ and $x \in \XX$.
In addition, \cite{micchelli2005learning} shows that the representer theorem still holds for vector-valued RKHS. 

\vspace{-.15cm}
\section{Kernel Bayesian inference as a regression problem}
\vspace{-.15cm}

One of the unique merits of the posterior embedding $\mu_Y^\pi(X=x)$ is that expectations \wrt posterior distributions can be computed via inner products, \ie, $\langle h,\mu_Y^\pi(X=x)\rangle = \mbb{E}_{p^\pi(Y|X=x)}[h(Y)]$ for all $h\in \Hy$. Since $\mu_Y^\pi(X=x) \in \Hy$, $\mu_Y^\pi$ can be viewed as an element of a vector-valued RKHS $\HK$ containing functions $f:\XX\rightarrow \Hy$. 

A natural optimization objective~\cite{lever2012conditional} thus follows from the above observations
\begin{align}
\Ep[\mu] := \sup_{\norm{h}_{\mcal{Y}}\leq 1} \mbb{E}_X\left[ (\mbb{E}_Y[h(Y)|X] - \langle h,\mu(X)\rangle_{\Hy})^2 \right],
\end{align}
where $\mbb{E}_X[\cdot]$ denotes the expectation \wrt $p^\pi(X)$ and $\mbb{E}_Y[\cdot|X]$ denotes the expectation \wrt the Bayesian posterior distribution, \ie, $p^\pi (Y\mid X) \propto \pi(Y)p(X\mid Y)$. Clearly, $\mu_{Y}^\pi = \arginf_{\mu}\Ep[\mu]$. Following \cite{lever2012conditional}, we introduce an upper bound $\Ep_s$ for $\Ep$ by applying Jensen's and Cauchy-Schwarz's inequalities consecutively
\begin{align}
\Ep_s[\mu] := \mbb{E}_{(X,Y)}[\norm{\psi(Y) - \mu(X)}_{\Hy}^2],
\end{align}
where $(X,Y)$ is the random variable on $\cal{X}\times\mcal{Y}$ with the joint distribution $p^\pi(X,Y) = \pi(Y)p(X\mid Y)$.

The first step to make this optimizational framework practical is to find finite sample estimators of $\Ep_s[\mu]$. We will show how to do this in the following section.

\vspace{-.15cm}
\subsection{A consistent estimator of $\Ep_s[\mu]$}
\vspace{-.15cm}

Unlike the conditional embeddings in \cite{lever2012conditional}, we do not have \iid samples from the joint distribution $p^\pi(X,Y)$, as the priors and likelihood functions are represented with samples from different distributions. We will eliminate this problem using a kernel trick, which is one of our main innovations in this paper.

The idea is to use the inner product property of a kernel embedding $\mu_{(X,Y)}$ to represent the expectation $\mbb{E}_{(X,Y)}[\norm{\psi(Y) - \mu(X)}_{\Hy}^2]$ and then use finite sample estimators of $\mu_{(X,Y)}$ to estimate $\Ep_s[\mu]$. Recall that we can identify $\mcal{C}_{XY} := \mbb{E}_{XY}[\phi(X)\otimes\psi(Y)]$ with $\mu_{(X,Y)}$ in a product space $\Hx\otimes\Hy$ with a product kernel $k_\XX k_\YY$ on $\XX\times \YY$~\cite{aronszajn1950theory}. Let $f(x,y) = \norm{\psi(y) - \mu(x)}_{\Hy}^2$ and assume that $f \in \Hx\otimes\Hy$. The optimization objective $\Ep_s[\mu]$ can be written as
\begin{align}
\Ep_s[\mu] = \mbb{E}_{(X,Y)}[\norm{\psi(Y) - \mu(X)}_{\Hy}^2] = \langle f,\mu_{(X,Y)}\rangle_{\Hx\otimes\Hy}.
\end{align}
From \thmref{thm:found}, we assert that $\mu_{(X,Y)} = \CC_{(X,Y)Y}\CC_{YY}^{-1}\pi_Y$ and a natural estimator follows to be $\widehat{\mu}_{(X,Y)} = \widehat{\CC}_{(X,Y)Y}(\widehat{\CC}_{YY}+\lambda I)^{-1}\widehat{\pi}_Y$. As a result, $\widehat{\Ep}_s[\mu]:= \langle \widehat{\mu}_{(X,Y)},f\rangle_{\Hx\otimes\Hy}$ and we introduce the following proposition to write $\widehat{\Ep}_s$ in terms of Gram matrices.
\begin{proposition}[Proof in Appendix]\label{prop:beta}
Suppose $(X,Y)$ is a random variable in $\XX\times\YY$, where the prior for $Y$ is $\pi(Y)$ and the likelihood is $p(X\mid Y)$. Let $\Hx$ be a RKHS with kernel $k_\XX$ and feature map $\phi(x)$, $\Hy$ be a RKHS with kernel $k_\YY$ and feature map $\psi(y)$, $\phi(x,y)$ be the feature map of $\Hx\otimes\Hy$, $\widehat{\pi}_Y = \sum_{i=1}^l \tilde{\alpha}_i \psi(\tilde{y}_i)$ be a consistent estimator of $\pi_Y$ and $\{(x_i,y_i)\}_{i=1}^n$ be a sample representing $p(X\mid Y)$. Under the assumption that $f(x,y) = \norm{\psi(y) - \mu(x)}_{\Hy}^2 \in \Hx\otimes\Hy$, we have
\begin{align}
\widehat{\Ep}_s[\mu] = \sum_{i=1}^{n}\beta_i \norm{\psi(y_i) - \mu(x_i)}_{\Hy}^2,
\end{align}
where $\bs{\beta} = (\beta_1,\cdots,\beta_n)^\T$ is given by $\bs{\beta} = (G_Y + n\lambda I)^{-1} \tilde{G}_Y\tilde{\bs{\alpha}}$,
where $(G_Y)_{ij} = k_\YY(y_i,y_j)$, $(\tilde{G}_Y)_{ij} = k_\YY(y_i,\tilde{y}_j)$, and $\tilde{\bs{\alpha}} = (\tilde{\alpha}_1,\cdots,\tilde{\alpha}_l)^\T$.
\end{proposition}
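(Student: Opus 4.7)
The plan is to reduce the claim to a single application of the reproducing property in the product space $\Hx\otimes\Hy$, once we have expanded $\widehat{\mu}_{(X,Y)}$ as a finite linear combination of the product-feature vectors $\phi(x_i)\otimes\psi(y_i)$. Concretely, I would start from the given estimator
\[
\widehat{\mu}_{(X,Y)} \;=\; \widehat{\CC}_{(X,Y)Y}\,(\widehat{\CC}_{YY}+\lambda I)^{-1}\widehat{\pi}_Y
\]
and use the empirical form $\widehat{\CC}_{(X,Y)Y}=\tfrac{1}{n}\sum_{i=1}^n\bigl(\phi(x_i)\otimes\psi(y_i)\bigr)\otimes\psi(y_i)$. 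Pulling the scalars $\langle\psi(y_i),(\widehat{\CC}_{YY}+\lambda I)^{-1}\widehat{\pi}_Y\rangle_{\Hy}$ outside, this immediately gives
\[
\widehat{\mu}_{(X,Y)} \;=\; \sum_{i=1}^n \beta_i\,\phi(x_i)\otimes\psi(y_i),\qquad
\beta_i \;=\; \tfrac{1}{n}\bigl\langle \psi(y_i),\,(\widehat{\CC}_{YY}+\lambda I)^{-1}\widehat{\pi}_Y\bigr\rangle_{\Hy}.
\]

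Next I would identify the vector $\bs{\beta}=(\beta_i)$ with Gram matrices. Setting $v:=(\widehat{\CC}_{YY}+\lambda I)^{-1}\widehat{\pi}_Y$ and $u_k:=\langle\psi(y_k),v\rangle_{\Hy}$, the defining equation $(\widehat{\CC}_{YY}+\lambda I)v=\widehat{\pi}_Y=\sum_j\tilde{\alpha}_j\psi(\tilde{y}_j)$, read against $\psi(y_i)$ and using $\widehat{\CC}_{YY}=\tfrac{1}{n}\sum_k\psi(y_k)\otimes\psi(y_k)$, yields the linear system
\[
\lambda u_i + \tfrac{1}{n}\sum_{k=1}^n (G_Y)_{ik}\,u_k \;=\; \sum_{j=1}^l (\tilde{G}_Y)_{ij}\,\tilde{\alpha}_j,
\]
i.e.\ $(\lambda I+\tfrac{1}{n}G_Y)\bs{u}=\tilde{G}_Y\tilde{\bs{\alpha}}$. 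Since $\beta_i=u_i/n$, multiplying through by $n$ gives exactly $\bs{\beta}=(G_Y+n\lambda I)^{-1}\tilde{G}_Y\tilde{\bs{\alpha}}$, the formula in the statement. This is essentially a push-through identity $(A+\lambda I)^{-1}\Psi=\Psi(\Psi^*\Psi+\lambda I)^{-1}$ applied to the feature operator $\Psi=[\psi(y_1),\ldots,\psi(y_n)]$, but derived directly to stay self-contained.

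Finally I would combine the two steps. Under the assumption $f(x,y)=\|\psi(y)-\mu(x)\|_{\Hy}^2\in\Hx\otimes\Hy$, the reproducing property in the product RKHS with kernel $k_\XX k_\YY$ gives $\langle f,\phi(x_i)\otimes\psi(y_i)\rangle_{\Hx\otimes\Hy}=f(x_i,y_i)$, so
\[
\widehat{\Ep}_s[\mu] \;=\; \langle f,\widehat{\mu}_{(X,Y)}\rangle_{\Hx\otimes\Hy} \;=\; \sum_{i=1}^n \beta_i\, f(x_i,y_i) \;=\; \sum_{i=1}^n \beta_i\,\|\psi(y_i)-\mu(x_i)\|_{\Hy}^2,
\]
which is the desired identity.

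The only delicate point is the middle step: the vector $v=(\widehat{\CC}_{YY}+\lambda I)^{-1}\widehat{\pi}_Y$ does not a priori lie in the span of $\{\psi(y_i)\}_{i=1}^n$ (since $\widehat{\pi}_Y$ is built from the $\tilde{y}_j$), so one cannot just substitute an ansatz $v=\sum_i c_i\psi(y_i)$. I would sidestep this by working only with the inner products $u_i=\langle\psi(y_i),v\rangle_{\Hy}$, which are all that $\bs{\beta}$ depends on, and closing the system via the identity $(\widehat{\CC}_{YY}+\lambda I)v=\widehat{\pi}_Y$ as above. Everything else reduces to bookkeeping with the empirical operators.
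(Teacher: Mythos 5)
Your proof is correct and follows essentially the same route as the paper's: expand $\widehat{\mu}_{(X,Y)}$ as $\sum_{i=1}^n\beta_i\,\phi(x_i)\otimes\psi(y_i)$, extract the Gram-matrix system for $\bs{\beta}$ from $(\widehat{\CC}_{YY}+\lambda I)v=\widehat{\pi}_Y$ by pairing against the $\psi(y_i)$, and finish with the reproducing property in $\Hx\otimes\Hy$. Your choice to work with the inner products $u_i=\langle\psi(y_i),v\rangle$ rather than an expansion of $v$ is equivalent to the paper's decomposition $v=\sum_i a_i\psi(y_i)+h_\perp$ (there $\bs{u}=G_Y\bs{a}$ and $h_\perp$ is annihilated by the pairing), so the span issue you flag is handled in the same effective way.
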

The consistency of $\widehat{\Ep}_s[\mu]$ is a direct consequence of the following theorem adapted from~\cite{fukumizu2011kernel}, since the Cauchy-Schwarz inequality ensures $|\langle \mu_{(X,Y)},f\rangle - \langle \widehat{\mu}_{(X,Y)},f\rangle|\leq \norm{\mu_{(X,Y)}-\widehat{\mu}_{(X,Y)}}\norm{f}$.
\begin{theorem}[Adapted from \cite{fukumizu2011kernel}, Theorem 8]\label{thm:consist1}
Assume that $\CC_{YY}$ is injective, $\widehat{\pi}_Y$ is a consistent estimator of $\pi_Y$ in $\Hy$ norm, and that $\mbb{E}[k((X,Y),(\tilde{X},\tilde{Y}))\mid Y=y,\tilde{Y}=\tilde{y}]$ is included in $\Hy \otimes \Hy$ as a function of $(y,\tilde{y})$, where $(\tilde{X},\tilde{Y})$ is an independent copy of $(X,Y)$. Then, if the regularization coefficient $\lambda_n$ decays to $0$ sufficiently slowly,
\begin{align}
\norm{\widehat{\CC}_{(X,Y)Y}(\widehat{\CC}_{YY}+\lambda_n I)^{-1} \widehat{\pi}_Y - \mu_{(X,Y)}}_{\Hx\otimes\Hy}\rightarrow 0
\end{align}
in probability as $n \rightarrow \infty$.
\end{theorem}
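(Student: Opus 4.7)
The plan is to split the error via the triangle inequality into a bias term (the effect of regularization at the population level) and a variance term (the effect of sampling), then show that each vanishes under the stated rate condition on $\lambda_n$. Using \thmref{thm:found} and the injectivity of $\CC_{YY}$, write $\mu_{(X,Y)} = \CC_{(X,Y)Y}\CC_{YY}^{-1}\pi_Y$ and decompose
\begin{align*}
\widehat{\CC}_{(X,Y)Y}(\widehat{\CC}_{YY}+\lambda_n I)^{-1}\widehat{\pi}_Y - \mu_{(X,Y)} = T_1 + T_2,
\end{align*}
where $T_1 = \widehat{\CC}_{(X,Y)Y}(\widehat{\CC}_{YY}+\lambda_n I)^{-1}\widehat{\pi}_Y - \CC_{(X,Y)Y}(\CC_{YY}+\lambda_n I)^{-1}\pi_Y$ is the sampling error at regularization level $\lambda_n$, and $T_2 = \CC_{(X,Y)Y}\bigl[(\CC_{YY}+\lambda_n I)^{-1} - \CC_{YY}^{-1}\bigr]\pi_Y$ is the regularization bias.

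For the bias $T_2$, I would use the algebraic identity $T_2 = -\lambda_n\,\CC_{(X,Y)Y}(\CC_{YY}+\lambda_n I)^{-1}\CC_{YY}^{-1}\pi_Y$. The hypothesis that $\mbb{E}[k((X,Y),(\tilde X,\tilde Y))\mid Y=y,\tilde Y=\tilde y]$ lies in $\Hy\otimes\Hy$ is a \emph{source condition} that guarantees $\CC_{(X,Y)Y}\CC_{YY}^{-1}$ extends to a bounded operator on a subspace containing $\pi_Y$. Given that, the functional calculus for the self-adjoint compact operator $\CC_{YY}$ reduces $\norm{T_2}\to 0$ to a dominated-convergence argument on its spectrum as $\lambda_n\downarrow 0$.

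For the variance $T_1$, I would repeatedly apply the resolvent identity
\begin{align*}
(\widehat{\CC}_{YY}+\lambda_n I)^{-1} - (\CC_{YY}+\lambda_n I)^{-1} = -(\widehat{\CC}_{YY}+\lambda_n I)^{-1}(\widehat{\CC}_{YY}-\CC_{YY})(\CC_{YY}+\lambda_n I)^{-1},
\end{align*}
combined with the triangle inequality, peeling off one perturbed factor at a time. Plugging in the three $O_p(n^{-1/2})$ Hilbert-norm rates for $\widehat{\CC}_{YY}$, $\widehat{\CC}_{(X,Y)Y}$ and $\widehat{\pi}_Y$, and using the deterministic bound $\norm{(A+\lambda_n I)^{-1}}\le 1/\lambda_n$ for positive self-adjoint $A$, the dominant contribution scales like $O_p(n^{-1/2}\lambda_n^{-2})$, which vanishes as long as $\sqrt{n}\,\lambda_n^{2}\to\infty$.

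The main obstacle is precisely the bias/variance trade-off encoded by the phrase ``$\lambda_n$ decays sufficiently slowly'': the $T_2$ analysis pushes $\lambda_n$ to zero, while the $T_1$ bound forbids it from decaying faster than the sampling noise can absorb. A concrete schedule such as $\lambda_n \propto n^{-1/5}$ satisfies both. Technically, the $T_1$ computation is routine bookkeeping after invoking the resolvent identity; the real conceptual content is the verification of the source condition for $T_2$, for which the stated hypothesis on the conditional expectation $\mbb{E}[k\mid Y,\tilde Y]\in\Hy\otimes\Hy$ is tailor-made.
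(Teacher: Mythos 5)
The paper itself gives no proof of this theorem --- it is imported as ``adapted from'' Theorem~8 of the kernel Bayes' rule paper --- so there is no in-paper argument to compare against. Judged on its own, your architecture (bias/variance split, resolvent identity for the sampling term, and a trade-off forcing $\lambda_n\to 0$ ``sufficiently slowly'') matches the standard proof strategy, but it has two genuine gaps.

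First, your bias analysis starts from $\mu_{(X,Y)}=\CC_{(X,Y)Y}\CC_{YY}^{-1}\pi_Y$ and the formula $T_2=-\lambda_n\,\CC_{(X,Y)Y}(\CC_{YY}+\lambda_n I)^{-1}\CC_{YY}^{-1}\pi_Y$. Both require $\pi_Y\in\mcal{R}(\CC_{YY})$ --- exactly the range hypothesis of \thmref{thm:found} --- which is \emph{not} among the assumptions of \thmref{thm:consist1} and is precisely the restrictive condition this consistency statement is designed to avoid ($\CC_{YY}^{-1}$ is unbounded and $\pi_Y$ need not lie in its domain). The hypothesis that $\theta(y,\tilde y)=\mbb{E}[k((X,Y),(\tilde X,\tilde Y))\mid Y=y,\tilde Y=\tilde y]$ lies in $\Hy\otimes\Hy$ is not a source condition on $\pi_Y$; its role is to give the operator identity $\CC_{Y(X,Y)}\CC_{(X,Y)Y}=\CC_{YY}\Theta\,\CC_{YY}$ with $\Theta$ Hilbert--Schmidt, whence $\norm{\CC_{(X,Y)Y}f}\le\norm{\Theta}^{1/2}\norm{\CC_{YY}f}$ and $\CC_{(X,Y)Y}=B\,\CC_{YY}$ for a bounded $B$. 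The bias should then be written as $B\bigl[\CC_{YY}(\CC_{YY}+\lambda_n I)^{-1}-I\bigr]\pi_Y$, which tends to zero by the spectral theorem without ever applying $\CC_{YY}^{-1}$ to $\pi_Y$; and one must separately verify that the limit $B\pi_Y$ equals $\mu_{(X,Y)}$ (e.g., by expanding $\norm{\CC_{(X,Y)Y}(\CC_{YY}+\lambda I)^{-1}\pi_Y-\mu_{(X,Y)}}^2$ and computing the cross term directly), since \thmref{thm:found} cannot be invoked here.

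Second, your concrete schedule $\lambda_n\propto n^{-1/5}$ is unjustified: $\widehat{\pi}_Y$ is only assumed \emph{consistent}, with no rate, so the contribution $\CC_{(X,Y)Y}(\widehat{\CC}_{YY}+\lambda_n I)^{-1}(\widehat{\pi}_Y-\pi_Y)$ forces ``sufficiently slowly'' to be interpreted relative to the unknown rate of $\widehat{\pi}_Y$ as well as to the $O_p(n^{-1/2})$ rates of the covariance estimators; no explicit polynomial schedule can be asserted from the stated hypotheses. As a side benefit, the factorization $\CC_{(X,Y)Y}=B\,\CC_{YY}$ turns several of your $1/\lambda_n$ factors into $O(1)$ bounds (e.g., $\norm{\CC_{(X,Y)Y}(\CC_{YY}+\lambda_n I)^{-1}}\le\norm{B}$), which is worth exploiting in the variance bookkeeping.
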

Although $\widehat{\Ep}_s[\mu]$ is a consistent estimator of $\Ep_s[\mu]$, it does not necessarily have minima, since the coefficients $\beta_i$ can be negative. One of our main contributions in this paper is the discovery that we can ignore data points $(x_i,y_i)$ with a negative $\beta_i$, \ie, replacing $\beta_i$ with $\beta_i^+:=\max(0,\beta_i)$ in $\widehat{\Ep}_s[\mu]$. We will give explanations and theoretical justifications in the next section.

\vspace{-.15cm}
\subsection{The thresholding regularization}
\vspace{-.15cm}

We show in the following theorem that $\Epp := \sum_{i=1}^n \beta_i^+ \norm{\psi(y_i)-\mu(x_i)}^2$ converges to $\Ep_s[\mu]$ in probability in discrete situations. The trick of replacing $\beta_i$ with $\beta_i^+$ is named \emph{thresholding regularization}.
\begin{theorem}[Proof in Appendix]\label{thm:consist2}
	Assume that $\XX$ is compact and $|\YY| < \infty$, $k$ is a strictly positive definite continuous kernel with $\sup_{(x,y)} k((x,y),(x,y)) < \kappa$ and $f(x,y) = \norm{\psi(y) - \mu(x)}_{\Hy}^2 \in \Hx\otimes\Hy$. With the conditions in \thmref{thm:consist1}, we assert that $\widehat{\mu}^+_{(X,Y)}$ is a consistent estimator of $\mu_{(X,Y)}$ and $\left|\Epp - \Ep_s[\mu]\right|\rightarrow 0$ in probability as $n\rightarrow \infty$.
\end{theorem}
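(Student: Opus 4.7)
The plan is to deduce the convergence $|\Epp - \Ep_s[\mu]|\to 0$ from the consistency of $\widehat{\mu}^+_{(X,Y)}$ via Cauchy--Schwarz. Since the reproducing property in $\Hx\otimes\Hy$ gives $\Epp = \langle \widehat{\mu}^+_{(X,Y)}, f\rangle$ and $\Ep_s[\mu] = \langle \mu_{(X,Y)}, f\rangle$, one has $|\Epp - \Ep_s[\mu]| \le \|\widehat{\mu}^+_{(X,Y)} - \mu_{(X,Y)}\|\cdot\|f\|$, and $\|f\|<\infty$ by the hypothesis $f\in\Hx\otimes\Hy$. So the whole theorem reduces to showing $\widehat{\mu}^+_{(X,Y)}\to\mu_{(X,Y)}$ in $\Hx\otimes\Hy$ in probability.

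The structural observation I would establish first is that the weights $\bs\beta=(G_Y+n\lambda I)^{-1}\tilde G_Y\tilde{\bs\alpha}$ from Proposition~\ref{prop:beta} depend only on $y_i$ and not on $x_i$: when $y_i=y_{i'}$, the $i$-th and $i'$-th rows and columns of $G_Y$ coincide and the $i$-th and $i'$-th rows of $\tilde G_Y$ coincide, so the transposition $P$ of coordinates $i,i'$ commutes with $G_Y+n\lambda I$ and fixes $\tilde G_Y\tilde{\bs\alpha}$, forcing $\beta_i=\beta_{i'}$. Writing $\beta_i=\tilde\beta_{y_i}$ and grouping samples by their $Y$-value yields the slice decompositions
\begin{align*}
\widehat{\mu}_{(X,Y)}=\sum_{y\in\YY}\tilde\beta_y\, n_y\,\bar\phi_y\otimes\psi(y),\quad \widehat{\mu}^+_{(X,Y)}=\sum_{y\in\YY}\tilde\beta_y^+\, n_y\,\bar\phi_y\otimes\psi(y),
\end{align*}
where $n_y=|\{i:y_i=y\}|$, $\bar\phi_y=n_y^{-1}\sum_{i:y_i=y}\phi(x_i)$, and $\tilde\beta_y^+=\max(0,\tilde\beta_y)$. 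The crucial point is that thresholding acts uniformly on each $y$-slice rather than on individual samples.

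With this reduction the rest is routine. Strict positive definiteness of $k_\YY$ on the finite set $\YY$ makes the Gram matrix $K_\YY:=(k_\YY(y,y'))_{y,y'\in\YY}$ positive definite, so the norm equivalence $\|\sum_y v_y\otimes\psi(y)\|^2_{\Hx\otimes\Hy}\ge\lambda_{\min}(K_\YY)\sum_y\|v_y\|^2_{\Hx}$ combined with Theorem~\ref{thm:consist1} gives the slice-wise convergence $\tilde\beta_y n_y\bar\phi_y\to\pi(y)\mu_{X\mid Y=y}$ in $\Hx$. A Hilbert-space law of large numbers yields $\bar\phi_y\to\mu_{X\mid Y=y}$ whenever $n_y\to\infty$ (which happens almost surely for every $y$ with positive joint probability), and since $k_\XX$ is strictly positive definite $\mu_{X\mid Y=y}\neq 0$, so pairing against $\mu_{X\mid Y=y}$ extracts the scalar convergence $\tilde\beta_y n_y\to\pi(y)\ge 0$. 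Continuity of $t\mapsto t^+$ then gives $\tilde\beta_y^+ n_y\to\pi(y)$, hence $\tilde\beta_y^+ n_y\bar\phi_y\to\pi(y)\mu_{X\mid Y=y}$, and reassembling the slices yields $\widehat{\mu}^+_{(X,Y)}\to\mu_{(X,Y)}$. Degenerate cases ($\pi(y)=0$ or $p(Y=y)=0$) are handled by observing that both sides of the slice-wise convergence then vanish. The hardest step is the symmetry argument forcing $\beta_i=\beta_{i'}$ when $y_i=y_{i'}$: without this slice-constancy of the weights, thresholding at the per-sample level could plausibly destroy consistency, so this observation is what really makes the thresholding regularization work in the finite-$\YY$ regime.
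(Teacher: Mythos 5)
Your proof is correct, and it takes a genuinely different route from the paper's. The paper first proves the statement when $\XX\times\YY$ is finite (Theorem~\ref{thm:finiteconsist}): strict positive definiteness on a finite set makes $\Hx\otimes\Hy$ contain the indicator of $\{i:\beta_i<0\}$, and pairing $\widehat{\mu}_{(X,Y)}-\mu_{(X,Y)}$ against that indicator shows $\sum_i\beta_i^-\to 0$, whence $\|\widehat{\mu}^+_{(X,Y)}-\widehat{\mu}_{(X,Y)}\|\le\sqrt{\kappa}\sum_i\beta_i^-\to 0$. It then passes to compact $\XX$ by an $\epsilon$-partition: uniform continuity of the feature map bounds $\|\mu_{(X,Y)}-\mu^{\epsilon}_{(X,Y)}\|$, the weights are unchanged under discretizing the $x$'s because they depend only on the $y$'s, and a four-term triangle inequality finishes. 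You instead push the ``$\beta$ depends only on the $y$'s'' observation further to slice-constancy ($\beta_i=\beta_{i'}$ when $y_i=y_{i'}$; your transposition argument is valid since $PG_YP=G_Y$ and $P\tilde{G}_Y\tilde{\bs{\alpha}}=\tilde{G}_Y\tilde{\bs{\alpha}}$), decompose the embedding into the finitely many $y$-slices, convert the $\Hx\otimes\Hy$ convergence of Theorem~\ref{thm:consist1} into slice-wise convergence via positive definiteness of the $|\YY|\times|\YY|$ Gram matrix, and reduce thresholding to continuity of $t\mapsto\max(0,t)$ applied to finitely many scalar sequences $\tilde{\beta}_yn_y\to\pi(y)\ge 0$. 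This avoids the discretization machinery entirely and yields Theorem~\ref{thm:betas}/Corollary~\ref{col:betapsum} ($\sum_i\beta_i^+\to 1$) for free by summing the slice limits, whereas the paper runs a separate argument with the constant function; the paper's indicator/discretization route, on the other hand, is the one that suggests how the finiteness of $\YY$ might eventually be relaxed. Two small points to tighten in your write-up: (i) pointwise strict positive definiteness of $k_\XX$ does not by itself guarantee $\mu_{X\mid Y=y}\neq 0$ for a non-discrete conditional (that requires integral strict positive definiteness), but you do not actually need this step --- whenever $\tilde{\beta}_yn_y\bar{\phi}_y\to 0$ the thresholded slice follows from $\|\tilde{\beta}_y^+n_y\bar{\phi}_y\|\le\|\tilde{\beta}_yn_y\bar{\phi}_y\|$; (ii) the law of large numbers for $\bar{\phi}_y$ should be stated conditionally on the random count $n_y\to\infty$, which holds a.s. exactly for the $y$ with positive sampling probability, the remaining slices contributing nothing to either estimator. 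Neither point affects the validity of the argument.
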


In the context of partially observed Markov decision processes (POMDPs)~~\cite{nishiyama2012hilbert}, a similar thresholding approach, combined with normalization, was proposed to make the Bellman operator isotonic and contractive. However, the authors left the consistency of that approach as an open problem. The justification of normalization has been provided in~\cite{lever2012conditional}, Lemma~2.2 under the finite space assumption. A slight modification of our proof of \thmref{thm:consist2} (change the probability space from $\XX\times\YY$ to $\XX$) can complete the other half as a side product, under the same assumptions.

Compared to the original squared regularization used in~\cite{fukumizu2011kernel}, thresholding regularization is more computational efficient because 1) it does not need to multiply the Gram matrix twice, and 2) it does not need to take into consideration those data points with negative $\beta_i$'s. In many cases a large portion of $\{\beta_i\}_{i=1}^n$ is negative but the sum of their absolute values is small. The finite space assumption in \thmref{thm:consist2} may also be weakened, but it requires deeper theoretical analyses.

\vspace{-.1cm}
\subsection{Minimizing $\Epp$}
\vspace{-.1cm}

Following the standard steps of solving a RKHS regression problem, we add a Tikhonov regularization term to $\Epp$ to provide a well-proposed problem,
\begin{align}
\Eln = \sum_{i=1}^n \beta_i^+ \norm{\psi(y_i) - \mu(x_i)}_{\Hy}^2 + \lambda \norm{\mu}_{\HK}^2.
\end{align}
Let $\muln = \argmin_{\mu} \Eln$. Note that $\Eln$ is a vector-valued regression problem, and the representer theorems in vector-valued RKHS apply here. We summarize the matrix expression of $\muln$ in the following proposition.
\begin{proposition}[Proof in Appendix]\label{prop:s1}
Without loss of generality, we assume that $\beta_i^+ \neq 0$ for all $1\leq i\leq n$. Let $\mu \in \HK$ and choose the kernel of $\HK$ to be $K(x_i,x_j) = k_\XX(x_i,x_j)\mcal{I}$, where $\mcal{I}: \HK\rightarrow\HK$ is an identity map.
Then
\begin{equation}
\muln(x) = \Psi(K_X + \lambda_n \Lambda^+)^{-1} K_{:x},
\end{equation}
where $\Psi=(\psi(y_1),\cdots,\psi(y_n))$, $(K_X)_{ij} = k_\XX(x_i,x_j)$, $\Lambda^+ = \up{diag}(1/\beta_1^+,\cdots,1/\beta_n^+)$, $K_{:x} = (k_\XX(x,x_1),\cdots,k_\XX(x,x_n))^\T$ and $\lambda_n$ is a positive regularization constant.
\end{proposition}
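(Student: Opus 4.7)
The plan is to reduce $\Eln$ to a finite-dimensional problem using the vector-valued representer theorem and then read off the closed form from first-order optimality. Because the operator kernel of $\HK$ is isotropic, $K(x_i,x_j) = k_\XX(x_i,x_j)\mcal{I}$, the representer theorem from~\cite{micchelli2005learning} guarantees that any minimizer has the form $\muln(\cdot) = \sum_{i=1}^n K_{x_i}\alpha_i = \sum_{i=1}^n k_\XX(\cdot,x_i)\alpha_i$ with $\alpha_i\in\Hy$, so the task reduces to pinning down these coefficients.

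First I would compute the Fr\'echet derivative of $\Eln$ in a test direction $\delta\mu\in\HK$,
\[
\delta\Eln = -2\sum_{i=1}^n \beta_i^+\langle \psi(y_i)-\mu(x_i),\delta\mu(x_i)\rangle_{\Hy} + 2\lambda_n\langle \mu,\delta\mu\rangle_{\HK},
\]
and then apply the generalized reproducing property $\langle y, f(x)\rangle_{\Hy} = \langle K_x y, f\rangle_{\HK}$ to pull the pointwise evaluations into $\HK$ inner products. Requiring $\delta\Eln = 0$ for every $\delta\mu$ yields the stationarity identity $\lambda_n\muln = \sum_i \beta_i^+ K_{x_i}(\psi(y_i)-\muln(x_i))$. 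Identifying the representer coefficients as $\alpha_i = (\beta_i^+/\lambda_n)(\psi(y_i)-\muln(x_i))$ and expanding $\muln(x_i) = \sum_j k_\XX(x_i,x_j)\alpha_j$ on the right, the assumption $\beta_i^+\neq 0$ lets me multiply each equation by $\lambda_n/\beta_i^+$ to obtain the linear system
\[
(K_X + \lambda_n\Lambda^+)\,\alpha = \Psi^\T,
\]
where $\alpha$ and $\Psi^\T$ are viewed as columns of $\Hy$-valued entries.

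The final step is to solve $\alpha = (K_X+\lambda_n\Lambda^+)^{-1}\Psi^\T$ and evaluate $\muln(x) = \sum_i k_\XX(x,x_i)\alpha_i = K_{:x}^\T\alpha$, which by symmetry of $(K_X+\lambda_n\Lambda^+)^{-1}$ rewrites as the claimed $\Psi(K_X+\lambda_n\Lambda^+)^{-1}K_{:x}$. I do not foresee a substantive obstacle; the only real care is notational, namely keeping track of $\Hy$-valued vectors and operator-valued kernels so that the matrix operations act on Hilbert-space entries rather than scalars. Invertibility of $K_X+\lambda_n\Lambda^+$ is automatic since $K_X$ is positive semidefinite and $\lambda_n\Lambda^+$ is strictly positive definite under the non-vanishing assumption on the $\beta_i^+$'s.
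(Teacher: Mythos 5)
Your proposal is correct and arrives at exactly the same linear system $(K_X+\lambda_n\Lambda^+)\,c=\Psi^\T$ as the paper, but the route to it is genuinely different in one respect: how optimality is certified. You invoke the vector-valued representer theorem of~\cite{micchelli2005learning} to restrict to the span of the $K_{x_i}$, then differentiate (Fr\'echet derivative set to zero plus the generalized reproducing property $\langle y,f(x)\rangle_{\Hy}=\langle K_xy,f\rangle_{\HK}$) to obtain the stationarity identity $\lambda_n\muln=\sum_i\beta_i^+K_{x_i}(\psi(y_i)-\muln(x_i))$; global optimality then rests implicitly on convexity of $\Eln$ (which does hold, since $\beta_i^+\geq 0$ and $\lambda_n>0$ --- you should say this explicitly, as stationarity alone is only a necessary condition). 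The paper instead never differentiates: it writes $\mu=\mu_0+g$ with $\mu_0=\sum_iK_{x_i}c_i$, \emph{conjectures} the coefficient equations $\psi(y_i)-\sum_jk_\XX(x_i,x_j)c_j=\frac{\lambda_n}{\beta_i^+}c_i$, and verifies that with this choice the cross terms in the expansion cancel, giving the pointwise inequality $\Eln[\mu]\geq\Eln[\mu_0]$ for every $\mu$ --- a self-contained completing-the-square argument that needs neither the representer theorem nor a convexity appeal. Your version is arguably the more systematic derivation (the coefficient equations emerge rather than being guessed), at the cost of leaning on two external facts; the paper's is more elementary and directly exhibits the global minimum. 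Your remarks on invertibility of $K_X+\lambda_n\Lambda^+$ and on the symmetry step $K_{:x}^\T(K_X+\lambda_n\Lambda^+)^{-1}\Psi^\T=\Psi(K_X+\lambda_n\Lambda^+)^{-1}K_{:x}$ are correct and are details the paper glosses over.
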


\vspace{-.1cm}
\subsection{Theoretical justifications for $\muln$}
\vspace{-.1cm}

In this section, we provide theoretical explanations for using $\muln$ as an estimator of the posterior embedding under specific assumptions. Let $\mu^* = \argmin_{\mu}\Ep[\mu]$, $\mu' = \argmin_{\mu}\Ep_s[\mu]$, and recall that $\muln = \argmin_{\mu}\Eln$. We first show the relations between $\mu^*$ and $\mu'$ and then discuss the relations between $\muln$ and $\mu'$.

The forms of $\Ep$ and $\Ep_s$ are exactly the same for posterior kernel embeddings and conditional kernel embeddings. As a consequence, the following theorem in~\cite{lever2012conditional} still hold.
\begin{theorem}[\cite{lever2012conditional}]
If there exists a $\mu^* \in \HK$ such that for any $h\in\Hy$, $\mbb{E}[h|X] = \langle h,\mu^*(X)\rangle_{\Hy}$ $p_X$-a.s., then $\mu^*$ is the $p_X$-a.s. unique minimiser of both objectives:
\begin{align*}
\mu^* = \argmin_{\mu\in\HK}\Ep[\mu] = \argmin_{\mu\in\HK}\Ep_s[\mu].
\end{align*}
\end{theorem}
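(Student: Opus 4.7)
The plan is to handle the two objectives separately: show that $\Ep[\mu^*]=0$ so that $\mu^*$ trivially minimises the non-negative functional $\Ep$, and show that $\Ep_s$ admits a quadratic decomposition centred at $\mu^*$ so that $\mu^*$ minimises it as well.

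For $\Ep$, I would first observe that the hypothesis says exactly $\mbb{E}_Y[h(Y)\mid X] - \langle h,\mu^*(X)\rangle_{\Hy} = 0$ $p_X$-a.s.\ for every $h\in\Hy$, so every term inside the outer expectation vanishes and $\Ep[\mu^*]=0$. Since $\Ep[\mu]\geq 0$ always, $\mu^*$ is a minimiser. For uniqueness up to $p_X$-null sets, suppose $\mu$ also satisfies $\Ep[\mu]=0$; then for each fixed $h$ in the unit ball of $\Hy$ the integrand $(\mbb{E}[h\mid X]-\langle h,\mu(X)\rangle)^2$ is zero $p_X$-a.s., and combining with the identity for $\mu^*$ gives $\langle h,\mu^*(X)-\mu(X)\rangle_{\Hy}=0$ $p_X$-a.s. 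To pass from ``for each $h$'' to ``for all $h$ simultaneously'' I would choose a countable dense subset of the unit ball (using separability of $\Hy$, which follows from $\Hy$ being generated by a continuous kernel on a separable space), take the countable union of null sets, and conclude $\mu^*(X)=\mu(X)$ $p_X$-a.s.

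For $\Ep_s$, the key step is the algebraic expansion
\begin{align*}
\Ep_s[\mu] = \mbb{E}_Y\|\psi(Y)\|_{\Hy}^2 - 2\,\mbb{E}_{(X,Y)}\langle \psi(Y),\mu(X)\rangle_{\Hy} + \mbb{E}_X\|\mu(X)\|_{\Hy}^2.
\end{align*}
By the tower property applied coordinatewise and the hypothesis (which identifies $\mu^*(X)$ with the conditional mean embedding $\mbb{E}[\psi(Y)\mid X]$ via the reproducing property $\langle \psi(y_0),\mu^*(X)\rangle = \mbb{E}[k_\YY(y_0,Y)\mid X]$ for arbitrary $y_0$), the cross term equals $\mbb{E}_X\langle \mu^*(X),\mu(X)\rangle_{\Hy}$. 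Completing the square then yields
\begin{align*}
\Ep_s[\mu] = \mbb{E}_Y\|\psi(Y)\|_{\Hy}^2 - \mbb{E}_X\|\mu^*(X)\|_{\Hy}^2 + \mbb{E}_X\|\mu(X)-\mu^*(X)\|_{\Hy}^2,
\end{align*}
where the first two terms do not depend on $\mu$. The unique $p_X$-a.s.\ minimiser is therefore $\mu=\mu^*$.

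The main obstacle, modest as it is, lies in the uniqueness statement for $\Ep$: the null set on which the identity for a given $h$ fails a priori depends on $h$, so I need the separability argument above to glue countably many such exceptional sets together. The only other point requiring care is justifying the exchange of inner product and expectation when rewriting the cross term in $\Ep_s$, which is standard given the Bochner integrability guaranteed by the bounded-kernel assumption used throughout the paper.
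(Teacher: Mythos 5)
Your argument is correct. Note that the paper itself offers no proof of this statement---it is imported verbatim from the cited reference \cite{lever2012conditional}---so there is nothing in the text to compare against; judged on its own, your two-part treatment (observing $\Ep[\mu^*]=0$ with the countable-dense-subset argument to glue the $h$-dependent null sets, and completing the square in $\Ep_s$ after identifying $\mu^*(X)$ with the conditional mean embedding $\mbb{E}[\psi(Y)\mid X]$) is exactly the standard route taken in that reference, and the two points you flag as needing care (separability of $\Hy$ and Bochner integrability under the bounded-kernel assumption) are precisely the right ones.
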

This theorem shows that if the vector-valued RKHS $\HK$ is rich enough to contain $\mu_{Y|X=x}^\pi$, both $\Ep$ and $\Ep_s$ can lead us to the correct embedding. In this case, it is reasonable to use $\mu'$ instead of $\mu^*$. For the situation where $\mu_{Y|X=x}^\pi \not\in \HK$, we refer the readers to \cite{lever2012conditional}.

Unfortunately, we cannot obtain the relation between $\muln$ and $\mu'$ by referring to~\cite{caponnetto2007optimal}, as in~\cite{lever2012conditional}. The main difficulty here is that $\{(x_i,y_i)\}|_{i=1}^n$ is not an \iid sample from $p^\pi(X,Y) = \pi(Y)p(X\mid Y)$ and the estimator $\Epp$ does not use \iid samples to estimate expectations. Therefore the concentration inequality (\cite{caponnetto2007optimal}, Prop.~2) used in the proofs of \cite{caponnetto2007optimal} cannot be applied.

To solve the problem, we propose \thmref{thm:conc} (in Appendix) which can lead to a consistency proof for $\muln$. The relation between $\muln$ and $\mu'$ can now be summarized in the following theorem.
\begin{theorem}[Proof in Appendix]\label{thm:diff}
Assume Hypothesis~1 and Hypothesis~2 in \cite{de2005risk} and our Assumption~1 (in the Appendix) hold. With the conditions in \thmref{thm:consist2}, we assert that if $\lambda_n$ decreases to 0 sufficiently slowly, 
\begin{align}
\Ep_s[\widehat{\mu}_{\lambda_n,n}] - \Ep_s[\mu'] \rightarrow 0
\end{align}
in probability as $n\rightarrow \infty$.
\end{theorem}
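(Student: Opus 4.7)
The plan is to follow a standard excess-risk decomposition of learning theory, combined with the RKHS-norm bound on the posterior embedding estimator from \thmref{thm:consist1} and \thmref{thm:consist2}. Write
\begin{align*}
\Ep_s[\muln] - \Ep_s[\mu']
= \underbrace{\bigl(\Ep_s[\muln] - \Epp[\muln]\bigr)}_{T_1}
+ \underbrace{\bigl(\Epp[\muln] - \Epp[\mu']\bigr)}_{T_2}
+ \underbrace{\bigl(\Epp[\mu'] - \Ep_s[\mu']\bigr)}_{T_3},
\end{align*}
and show each piece vanishes in probability when $\lambda_n\downarrow 0$ slowly.

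The two ``easy'' pieces come first. For $T_2$, the optimality of $\muln$ for $\Eln$ gives
$\Epp[\muln] \le \Eln[\muln] \le \Eln[\mu'] = \Epp[\mu'] + \lambda_n\norm{\mu'}_{\HK}^2$, so $T_2 \le \lambda_n\norm{\mu'}_{\HK}^2 \to 0$. For $T_3$, $\mu'$ is fixed, so writing $f_{\mu'}(x,y) = \norm{\psi(y)-\mu'(x)}_{\Hy}^2 \in \Hx\otimes\Hy$ and using the inner-product identity from Section~3, we get $|T_3| = |\langle f_{\mu'}, \widehat{\mu}^+_{(X,Y)} - \mu_{(X,Y)}\rangle| \le \norm{f_{\mu'}}\cdot\norm{\widehat{\mu}^+_{(X,Y)} - \mu_{(X,Y)}}$, which tends to $0$ in probability by the thresholded-embedding consistency in \thmref{thm:consist2}.

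The hard piece is $T_1$, since $\muln$ is data-dependent and the sample is not i.i.d.\ from $p^\pi(X,Y)$. The key is to confine $\muln$ to a ball whose radius grows only mildly with $n$: from $\lambda_n\norm{\muln}_{\HK}^2 \le \Eln[\muln] \le \Eln[0] = \Epp[0] \le \kappa\sum_i \beta_i^+$, and since $\sum_i \beta_i^+ = \langle \widehat{\mu}^+_{(X,Y)}, 1\otimes 1\rangle$ (or a bounded kernel expression thereof) is $O_p(1)$ by \thmref{thm:consist1}/\thmref{thm:consist2}, we obtain $\norm{\muln}_{\HK} = O_p(\lambda_n^{-1/2})$. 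For any $\mu$ with $\norm{\mu}_{\HK}\le R$, boundedness of the kernels together with the hypothesis $f_\mu\in\Hx\otimes\Hy$ allows a quantitative bound $\norm{f_\mu}_{\Hx\otimes\Hy} = O(R^2)$ (this is precisely the role played by Assumption~1 and Hypotheses~1--2 from~\cite{de2005risk}, which supply the required regularity of the map $\mu\mapsto f_\mu$). Combining,
\begin{align*}
|T_1| \;\le\; \norm{f_{\muln}}_{\Hx\otimes\Hy}\cdot\norm{\widehat{\mu}^+_{(X,Y)}-\mu_{(X,Y)}}_{\Hx\otimes\Hy} \;=\; O_p\!\left(\lambda_n^{-1}\,\epsilon_n\right),
\end{align*}
where $\epsilon_n$ is the rate of convergence supplied by the concentration inequality \thmref{thm:conc} in the Appendix.

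The main obstacle I expect is controlling this last display: because of the non-i.i.d.\ sampling we cannot invoke the standard concentration bound of \cite{caponnetto2007optimal}, which is exactly why the paper has to develop its own \thmref{thm:conc}. Once that tool is in place, the argument closes by choosing $\lambda_n\to 0$ so slowly that $\epsilon_n/\lambda_n\to 0$ in probability; then $T_1\to 0$, $T_2\to 0$ and $T_3\to 0$, which yields the claim. A secondary technical point is ensuring that the bound $\norm{f_\mu}_{\Hx\otimes\Hy}\lesssim \norm{\mu}_{\HK}^2 + 1$ genuinely holds in the vector-valued RKHS setting; this is where the structural hypotheses on $\HK$ (in particular the choice $K(x,x')=k_\XX(x,x')\mcal{I}$) are used in conjunction with the assumption $f_\mu\in\Hx\otimes\Hy$ from \thmref{thm:consist2}.
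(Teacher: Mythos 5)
Your three-term excess-risk decomposition is a genuinely different route from the paper's, and two of its three pieces are sound: the bound $T_2 \le \lambda_n\norm{\mu'}_{\HK}^2$ from optimality of $\widehat{\mu}_{\lambda_n,n}$ is correct, and $T_3\to 0$ follows from $\left|\langle f_{\mu'},\,\widehat{\mu}^+_{(X,Y)}-\mu_{(X,Y)}\rangle\right|\le\norm{f_{\mu'}}\,\norm{\widehat{\mu}^+_{(X,Y)}-\mu_{(X,Y)}}$ together with Theorem~\ref{thm:consist2}, since $\mu'$ is fixed and $f_{\mu'}\in\Hx\otimes\Hy$ is exactly what that theorem assumes. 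The gap is in $T_1$. Your argument needs, for the \emph{random} function $\widehat{\mu}_{\lambda_n,n}$ with $\norm{\widehat{\mu}_{\lambda_n,n}}_{\HK}=O_p(\lambda_n^{-1/2})$, both that $f_{\widehat{\mu}_{\lambda_n,n}}\in\Hx\otimes\Hy$ and that $\norm{f_{\mu}}_{\Hx\otimes\Hy}\lesssim\norm{\mu}_{\HK}^2+1$ uniformly over growing balls. Neither is available. Writing $f_\mu(x,y)=k_\YY(y,y)-2\langle\psi(y),\mu(x)\rangle_{\Hy}+\norm{\mu(x)}_{\Hy}^2$, already the summands $k_\YY(y,y)$ and $\norm{\mu(x)}_{\Hy}^2$ need not lie in $\Hx\otimes\Hy$ for common kernels (constants are not in a Gaussian RKHS), which is precisely why the paper only ever \emph{assumes} $f\in\Hx\otimes\Hy$ pointwise for the particular fixed $\mu$ under consideration and never with a norm bound that is quantitative in $\norm{\mu}_{\HK}$. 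Your parenthetical attributing this regularity to Assumption~1 and Hypotheses~1--2 of \cite{de2005risk} mischaracterizes them: Hypotheses~1--2 are boundedness conditions on $T_x$ and $\psi(y)$, and Assumption~1 places three \emph{specific fixed} functions ($\xi_1=(T+\lambda)^{-1}T_x$, $\xi_2=T_x(\mu^\lambda-\mu_{\HK})$, $\xi_3=(T+\lambda)^{-1/2}K_x(\psi(y)-\mu_{\HK}(x))$) in RKHSs so that Theorem~\ref{thm:conc} can be applied to them; none of this controls the map $\mu\mapsto f_\mu$.

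This is not a cosmetic issue but the reason the paper takes a different path (Theorem~\ref{thm:app1}, adapted from Thm.~5 of \cite{de2005risk}): instead of plugging the data-dependent minimizer back into the surrogate empirical functional, it uses the closed-form operator identity
\begin{align*}
\widehat{\mu}_{\lambda,n}-\mu^\lambda=(T_{\xx}+\lambda)^{-1}\bigl(A_{\zz}^*\psi(\yy)-T_{\xx}\mu_{\HK}\bigr)+(T_{\xx}+\lambda)^{-1}(T-T_{\xx})(\mu^\lambda-\mu_{\HK}),
\end{align*}
so that all randomness is isolated in $\beta^+$-weighted empirical averages of the three fixed functions above, each handled by the non-i.i.d.\ concentration result Theorem~\ref{thm:conc}, with Neumann-series bounds for $\norm{\sqrt{T}(T_{\xx}+\lambda)^{-1}}$ and the deterministic approximation error $\mcal{A}(\lambda)\to 0$ closing the argument. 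To rescue your route you would have to either add a uniform-over-balls hypothesis on $\mu\mapsto f_\mu$ that the paper does not make, or replace the $T_1$ bound by a uniform convergence argument over $\{\mu:\norm{\mu}_{\HK}\le R_n\}$ --- at which point you are effectively rebuilding the operator-theoretic proof.
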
 
\vspace{-.1cm}
\section{Kernel Bayesian inference with posterior regularization}
\vspace{-.1cm}

Based on our optimizational formulation of kernel Bayesian inference, we can add additional regularization terms to control the posterior embeddings. This technique gives us the possibility to incorporate rich side information from domain knowledge and to enforce supervisions on Bayesian inference. We call our framework of imposing posterior regularization \emph{kRegBayes}.

As an example of the framework, we study the following optimization problem
\begin{align}
\mcal{L} := \underbrace{\sum_{i=1}^m \beta_i^+ \norm{\mu(x_i) - \psi(y_i)}_{\Hy}^2 + \lambda \norm{\mu}_{\HK}^2}_{\Eln} + \underbrace{\delta \sum_{i=m+1}^n \norm{\mu(x_i) - \psi(t_i)}_{\Hy}^2}_{\text{The regularization term}},\label{eqn:l3}
\end{align}
where $\{(x_i,y_i)\}_{i=1}^m$ is the sample used for representing the likelihood, $\{(x_i,t_i)\}_{i=m+1}^n$ is the sample used for posterior regularization and $\lambda, \delta$ are the regularization constants. Note that in RKHS embeddings, $\psi(t)$ is identified as a point distribution at $t$~\cite{berlinet2011reproducing}. Hence the regularization term in \eqref{eqn:l3} encourages the posterior distributions
$p(Y\mid X=x_i)$ to be concentrated at $t_i$. More complicated regularization terms are also possible, such as $\|\mu(x_i) - \sum_{i=1}^l \alpha_i \psi(t_i)\|_{\Hy}$.

Compared to vanilla RegBayes, our kernel counterpart has several obvious advantages. First, the difference between two distributions can be naturally measured by RKHS norms. This makes it possible to regularize the posterior distribution as a whole, rather than through expectations of discriminant functions. Second, the framework of kernel Bayesian inference is totally nonparametric, where the priors and likelihood functions are all represented by respective samples. We will further demonstrate the properties of kRegBayes through experiments in the next section.

Let $\widehat{\mu}_{reg} = \argmin_{\mu}\mcal{L}$. It is clear that solving $\mcal{L}$ is substantially the same as $\Eln$ and we summarize it in the following proposition.
\begin{proposition}\label{prop:s2}
With the conditions in \propref{prop:s1}, we have
\begin{equation}
\widehat{\mu}_{reg}(x) = \Psi(K_X + \lambda \Lambda^+)^{-1} K_{:x},
\end{equation}
where $\Psi=(\psi(y_1),\cdots,\psi(y_n))$, $(K_X)_{ij} = k_\XX(x_i,x_j)|_{1\leq i,j\leq n}$, $\Lambda^+ = \up{diag}(1/\beta_1^+,\cdots,1/\beta_m^+, 1/\delta,\cdots,1/\delta)$, and $K_{:x} = (k_\XX(x,x_1),\cdots,k_\XX(x,x_n))^\T$.
\end{proposition}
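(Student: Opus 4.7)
The plan is to reduce Proposition~\ref{prop:s2} to Proposition~\ref{prop:s1} by observing that the loss $\mcal{L}$ has exactly the same functional form as $\Eln$, up to relabeling of coefficients and targets. This is the natural route because the extra posterior regularization term $\delta \sum_{i=m+1}^n \norm{\mu(x_i) - \psi(t_i)}_{\Hy}^2$ is just another weighted squared RKHS distance between $\mu(x_i)$ and a fixed element of $\Hy$, with positive weight $\delta$ playing exactly the role that $\beta_i^+$ played in $\Eln$.

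First, I would introduce the unified notation. For $i=1,\ldots,n$, set $\widetilde{y}_i = y_i$ when $i\leq m$ and $\widetilde{y}_i = t_i$ when $i>m$, and set $\gamma_i = \beta_i^+$ when $i\leq m$ and $\gamma_i = \delta$ when $i>m$. With these substitutions,
\begin{equation*}
\mcal{L} \;=\; \sum_{i=1}^n \gamma_i \norm{\mu(x_i) - \psi(\widetilde{y}_i)}_{\Hy}^2 + \lambda \norm{\mu}_{\HK}^2,
\end{equation*}
which is structurally identical to $\Eln$ with $\beta_i^+$ replaced by $\gamma_i$ and the training targets $\psi(y_i)$ replaced by $\psi(\widetilde{y}_i)$. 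Since all $\gamma_i > 0$ (we assume $\beta_i^+\neq 0$ for $i\leq m$ by the hypothesis of Proposition~\ref{prop:s1}, and $\delta>0$), the argument behind Proposition~\ref{prop:s1} applies verbatim: by the representer theorem in the vector-valued RKHS $\HK$ with operator-valued kernel $K(x,x') = k_\XX(x,x')\mcal{I}$, any minimiser admits the expansion $\mu(\cdot) = \sum_{j=1}^n k_\XX(\cdot,x_j) c_j$ for some $c_j \in \Hy$, and substituting this form yields a quadratic problem in the coefficients whose stationarity equations give precisely the stated closed form.

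Concretely, the next step is to carry out this stationarity calculation. Writing $\mu = \sum_j K_{x_j} c_j$, evaluating $\mu(x_i) = \sum_j k_\XX(x_i,x_j) c_j$, and using $\norm{\mu}_{\HK}^2 = \sum_{i,j} k_\XX(x_i,x_j) \langle c_i,c_j\rangle_{\Hy}$, the gradient of $\mcal{L}$ with respect to each $c_i$ equated to zero produces the linear system $(\Gamma K_X + \lambda I) C = \Gamma \widetilde{\Psi}^\T$, where $\Gamma = \up{diag}(\gamma_1,\ldots,\gamma_n)$, $C$ stacks the $c_i$, and $\widetilde{\Psi} = (\psi(\widetilde{y}_1),\ldots,\psi(\widetilde{y}_n))$. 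Multiplying by $\Gamma^{-1} = \Lambda^+$ yields $(K_X + \lambda \Lambda^+)C = \widetilde{\Psi}^\T$, and evaluating $\widehat{\mu}_{reg}(x) = \widetilde{\Psi}\, C^\T K_{:x}^{\text{(vec form)}}$ then gives $\widehat{\mu}_{reg}(x) = \widetilde{\Psi} (K_X + \lambda \Lambda^+)^{-1} K_{:x}$. Identifying $\widetilde{\Psi}$ with the $\Psi$ in the statement (which absorbs the $t_i$ for $i>m$ into the $y$-slots) finishes the proof.

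There is essentially no hard step in this argument; the work is purely bookkeeping. The only place requiring a moment of care is checking that all weights in $\Lambda^+$ are strictly positive so that $\Lambda^+$, and hence $K_X + \lambda\Lambda^+$, is well-defined and invertible---this follows from the same convention used in Proposition~\ref{prop:s1} (drop any index with $\beta_i^+ = 0$) together with the assumption $\delta > 0$. Everything else is a direct transcription of the representer-theorem derivation already invoked for Proposition~\ref{prop:s1}, so I would explicitly note the reduction and refer the reader to that proof rather than repeat the algebra.
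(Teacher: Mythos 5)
Your proposal is correct and takes essentially the same route as the paper, which gives no separate proof of this proposition but simply observes that minimizing $\mcal{L}$ is ``substantially the same'' as minimizing $\Eln$ and invokes the argument of Prop.~\ref{prop:s1}; your relabeling $\gamma_i,\widetilde{y}_i$ just makes that reduction explicit. The bookkeeping (positivity of the weights, absorbing the $t_i$ into the $y$-slots of $\Psi$) is handled correctly.
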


\section{Experiments}
In this section, we compare the results of kRegBayes and several other baselines for two state-space filtering tasks. The mechanism behind kernel filtering is stated in \cite{fukumizu2011kernel} and we provide a detailed introduction in Appendix, including all the formula used in implementation.
\paragraph{Toy dynamics}
This experiment is a twist of that used in~\cite{fukumizu2011kernel}. We report the results of extended Kalman filter (EKF)~\cite{julier1997new} and unscented Kalman filter (UKF)~\cite{wan2000unscented}, kernel Bayes'  rule (KBR)~\cite{fukumizu2011kernel}, kernel Bayesian learning with thresholding regularization (pKBR) and kRegBayes.

The data points $\{(\theta_t,x_t,y_t)\}$ are generated from the dynamics
\begin{equation}
\theta_{t+1} = \theta_{t} + 0.4 + \xi_t \quad(\up{mod~} 2\pi),\quad \begin{pmatrix}
x_{t+1}\\
y_{t+1}
\end{pmatrix} = (1+\sin(8\theta_{t+1}))\begin{pmatrix}
\cos \theta_{t+1}\\
\sin \theta_{t+1}
\end{pmatrix} + \zeta_{t},
\end{equation}
where $\theta_t$ is the hidden state, $(x_t,y_t)$ is the observation, $\xi_t \sim \mcal{N}(0,0.04)$ and $\zeta_t \sim \mcal{N}(0,0.04)$. Note that this dynamics is nonlinear for both transition and observation functions. The observation model is an oscillation around the unit circle. There are 1000 training data and 200 validation/test data for each algorithm.

\begin{wrapfigure}{r}{0.45\textwidth}\vspace{-.6cm}
\centering
    \label{fig:exp1_result}\includegraphics[width=0.4\textwidth]{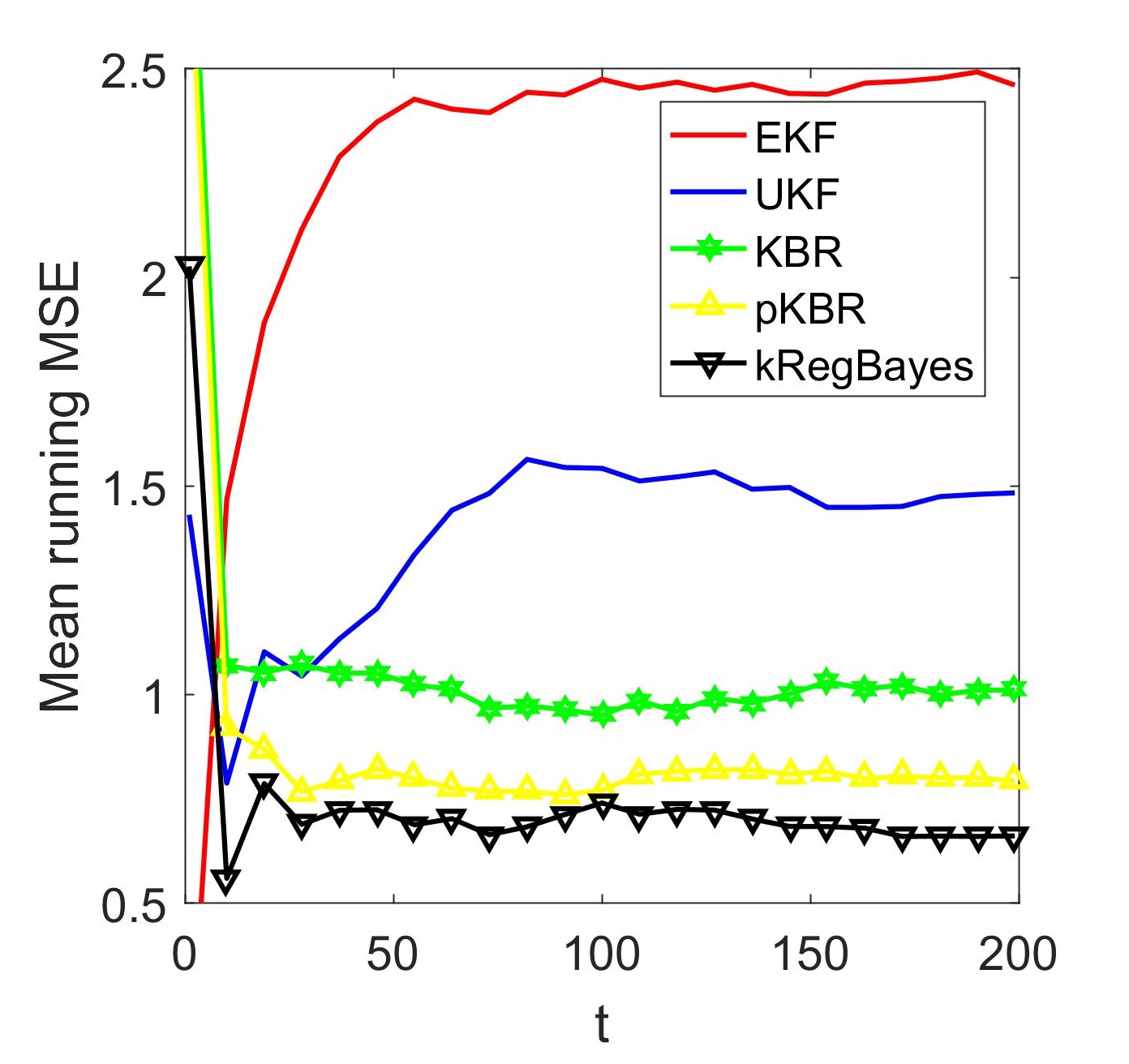}\vspace{-.2cm}
  \caption{Mean running MSEs against time steps for each algorithm. (Best view in color)}\vspace{-.3cm}
\end{wrapfigure}

We suppose that EKF, UKF and kRegBayes know the true dynamics of the model and the first hidden state $\theta_1$. In this case, we use $\tilde{\theta}_{t+1} = \theta_1 + 0.4t~(\up{mod}~2\pi)$ and $(\tilde{x}_{t+1},\tilde{y}_{t+1})^\T = (1+\sin(8\tilde{\theta}_{t+1}))(\cos\tilde{\theta}_{t+1},\sin\tilde{\theta}_{t+1})^\T$ as the supervision data point for the $(t+1)$-th step. We follow \cite{fukumizu2011kernel} to set our parameters.

The results are summarized in \figref{fig:exp1_result}. pKBR has lower errors compared to KBR, which means the thresholding regularization is practically no worse than the original squared regularization. The lower MSE of kRegBayes compared with pKBR shows that the posterior regularization successfully incorporates information from equations of the dynamics. Moreover, pKBR and kRegBayes run faster than KBR. The total running times for 50 random datasets of pKBR, kRegBayes and KBR are respectively 601.3s, 677.5s and 3667.4s.
\paragraph{Camera position recovery}
In this experiment, we build a scene containing a table and a chair, which is derived from \verb|classchair.pov|~(\url{http://www.oyonale.com}). With a fixed focal point, the position of the camera uniquely determines the view of the scene. The task of this experiment is to estimate the position of the camera given the image. This is a problem with practical applications in remote sensing and robotics.

 We vary the position of the camera in a plane with a fixed height. The transition equations of the hidden states are
\begin{equation*}
\theta_{t+1} = \theta_{t} + 0.2 + \xi_{\theta},\quad r_{t+1} = \max(R_2, \min(R_1,r_{t} + \xi_r)), \quad x_{t+1} = \cos\theta_{t+1},\quad y_{t+1} = \sin\theta_{t+1},
\end{equation*}
where $\xi_{\theta} \sim \mcal{N}(0,4e-4)$, $\xi_{r}\sim\mcal{N}(0,1)$, $0\leq R_1<R_2$ are two constants and $\{(x_{t},y_{t})\}|_{t=1}^m$ are treated as the hidden variables. As the observation at $t$-th step, we render a $100\times100$ image with the camera located at $(x_t,y_t)$. For training data, we set $R_1 =0$ and $R_2=10$ while for validation data and test data we set $R_1 = 5$ and $R_2 = 7$. The motivation is to distinguish the efficacy of enforcing the posterior distribution to concentrate around distance 6 by kRegBayes. We show a sample set of training and test images in \figref{fig:exp2_data}.

\begin{figure}
\centering
\includegraphics[width =0.0625\textwidth]{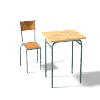}\includegraphics[width =0.0625\textwidth]{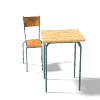}\includegraphics[width =0.0625\textwidth]{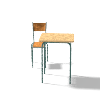}\includegraphics[width =0.0625\textwidth]{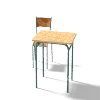}\includegraphics[width =0.0625\textwidth]{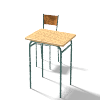}\includegraphics[width =0.0625\textwidth]{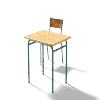}\includegraphics[width =0.0625\textwidth]{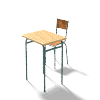}\includegraphics[width =0.0625\textwidth]{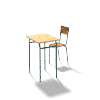}\includegraphics[width =0.0625\textwidth]{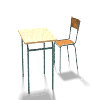}\includegraphics[width =0.0625\textwidth]{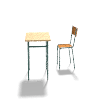}\includegraphics[width =0.0625\textwidth]{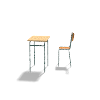}\includegraphics[width =0.0625\textwidth]{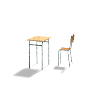}\includegraphics[width =0.0625\textwidth]{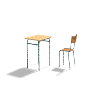}\includegraphics[width =0.0625\textwidth]{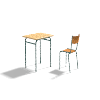}\includegraphics[width =0.0625\textwidth]{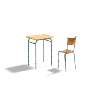}\includegraphics[width =0.0625\textwidth]{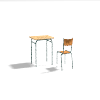}\\
\includegraphics[width =0.0625\textwidth]{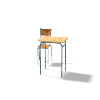}\includegraphics[width =0.0625\textwidth]{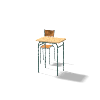}\includegraphics[width =0.0625\textwidth]{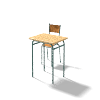}\includegraphics[width =0.0625\textwidth]{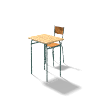}\includegraphics[width =0.0625\textwidth]{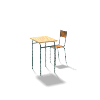}\includegraphics[width =0.0625\textwidth]{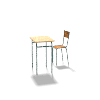}\includegraphics[width =0.0625\textwidth]{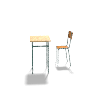}\includegraphics[width =0.0625\textwidth]{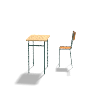}\includegraphics[width =0.0625\textwidth]{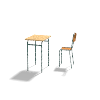}\includegraphics[width =0.0625\textwidth]{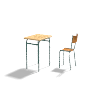}\includegraphics[width =0.0625\textwidth]{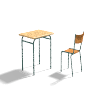}\includegraphics[width =0.0625\textwidth]{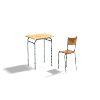}\includegraphics[width =0.0625\textwidth]{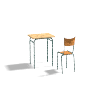}\includegraphics[width =0.0625\textwidth]{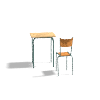}\includegraphics[width =0.0625\textwidth]{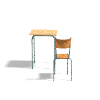}\includegraphics[width =0.0625\textwidth]{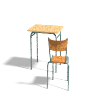}

\caption{First several frames of training data (upper row) and test data (lower row).}\label{fig:exp2_data}
\end{figure}
\begin{figure}
\centering
\subfigure[]{
\label{fig:exp2_result}\includegraphics[width=0.45\textwidth]{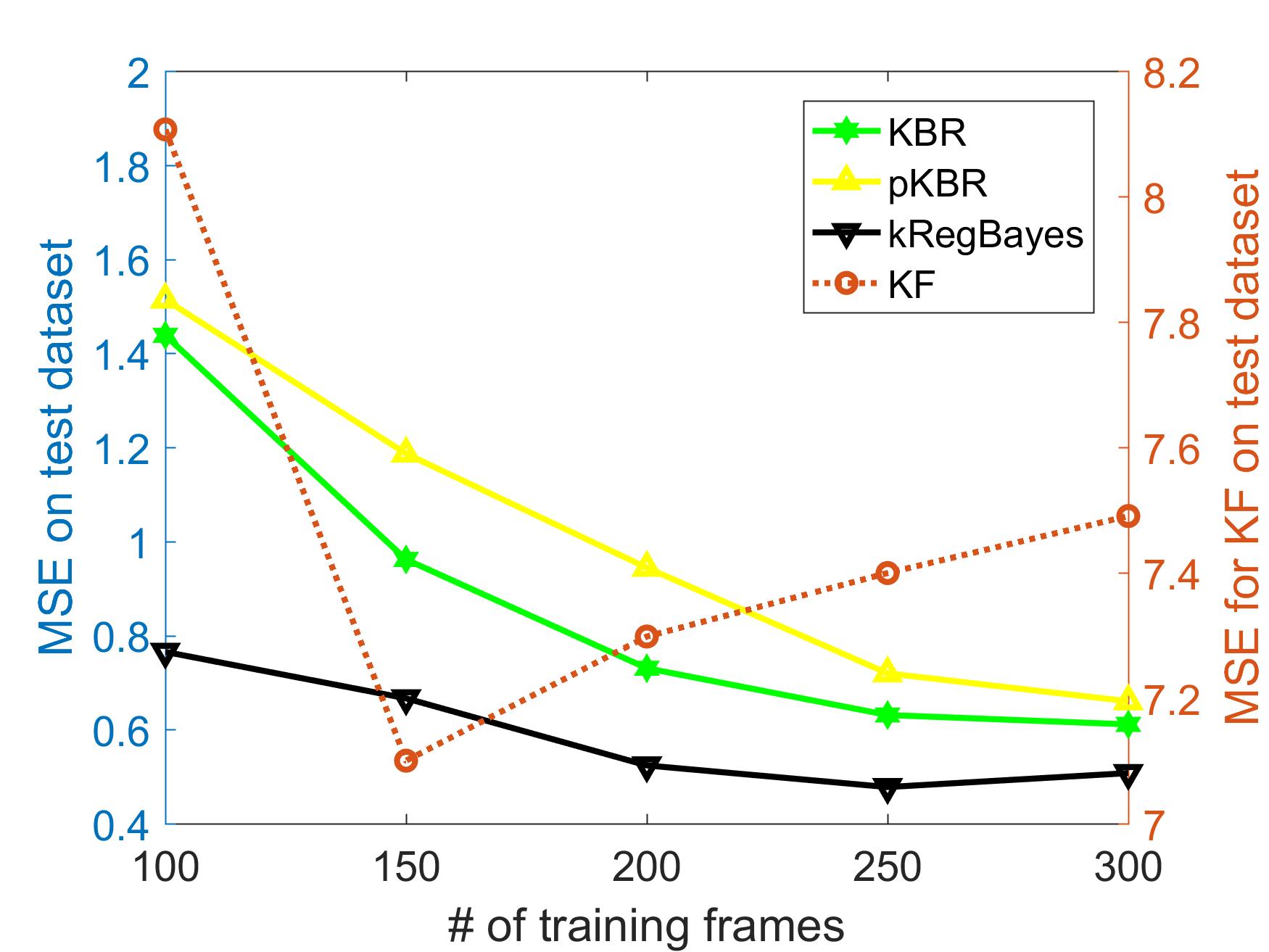}
}
\subfigure[]{
\label{fig:exp2_d}\includegraphics[width=0.45\textwidth]{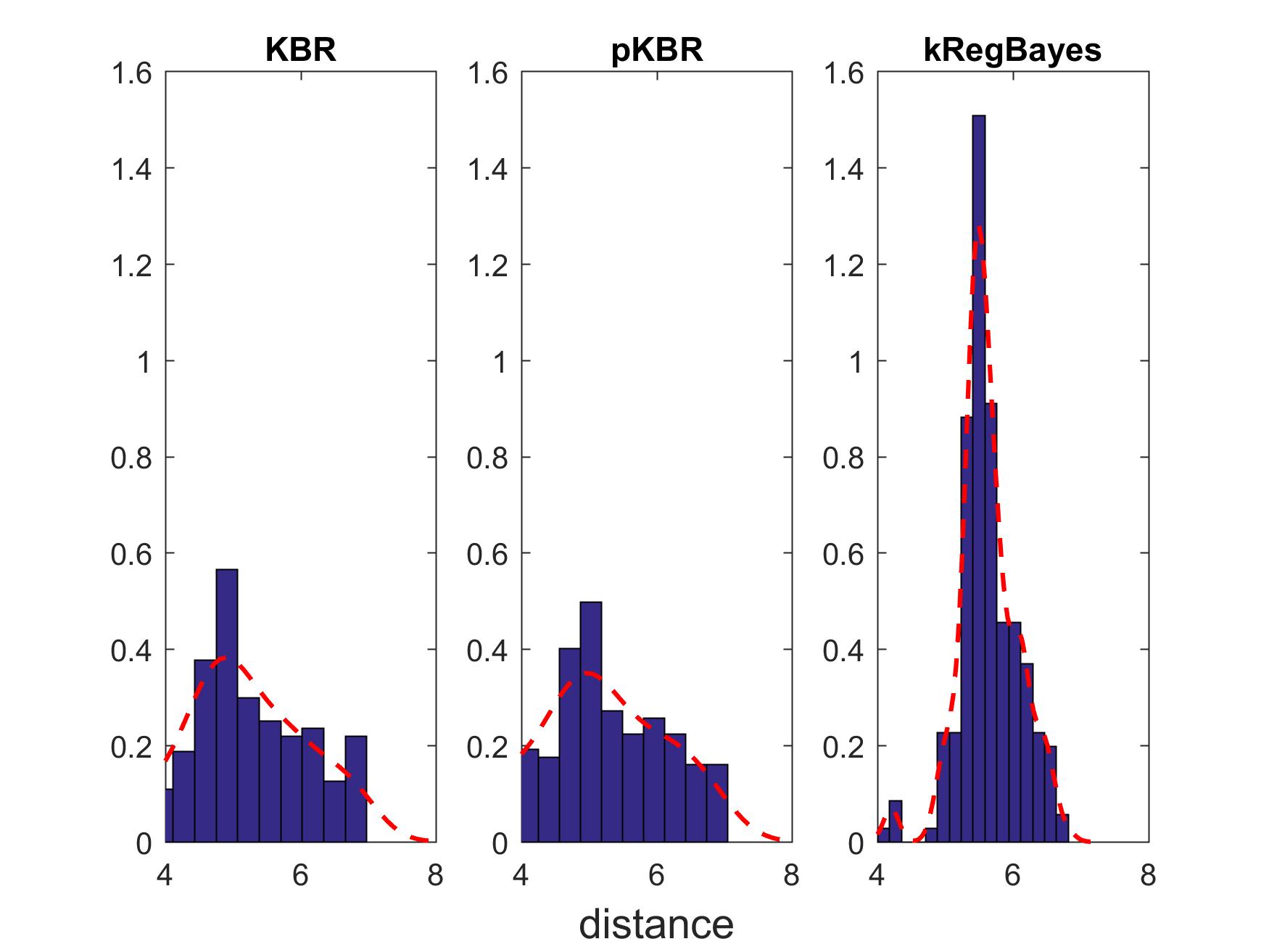}
}
\caption{\subref{fig:exp2_result} MSEs for different algorithms (best view in color). Since KF performs much worse than kernel filters, we use a different scale and plot it on the right $y$-axis. \subref{fig:exp2_d} Probability histograms for the distance between each state and the scene center. All algorithms use 100 training data.}\label{fig:exp2}
\end{figure}

We compare KBR, pKBR and kRegBayes with the traditional linear Kalman filter (KF~\cite{kalman1960new}). Following \cite{song2009hilbert} we down-sample the images and train a linear regressor for observation model. In all experiments, we flatten the images to a column vector and apply Gaussian RBF kernels if needed. The kernel band widths are set to be the median distances in the training data. Based on experiments on the validation dataset, we set $\lambda_T = 1e-6 = 2\delta_T$ and $\mu_T = 1e-5$.

To provide supervision for kRegBayes, we uniformly generate 2000 data points $\{(\hat{x}_i,\hat{y}_t)\}_{i=1}^{2000}$ on the circle $r = 6$. Given the previous estimate $(\tilde{x}_t,\tilde{y}_t)$, we first compute $\hat{\theta}_t = \arctan (\hat{y}_t/\hat{x}_t )$ (where the value $\hat{\theta}_t$ is adapted according to the quadrant of $(\hat{x}_t,\hat{y}_t)$) and estimate $(\breve{x}_{t+1},\breve{y}_{t+1}) = (\cos(\hat{\theta}_t+0.4),\sin(\hat{\theta}_t+0.4))$. Next, we find the nearest point to $(\breve{x}_{t+1},\breve{y}_{t+1})$ in the supervision set $(\tilde{x}_k,\tilde{y}_k)$ and add the regularization $\mu_T\norm{\mu(\mcal{I}_{t+1}) - \phi(\tilde{x}_k,\tilde{y}_k)}$ to the posterior embedding, where $\mcal{I}_{t+1}$ denotes the $(t+1)$-th image.

 We vary the size of training dataset from 100 to 300 and report the results of KBR, pKBR, kRegBayes and KF on 200 test images in \figref{fig:exp2}. KF performs much worse than all three kernel filters due to the extreme non-linearity. The result of pKBR is a little worse than that of KBR, but the gap decreases as the training dataset becomes larger. kRegBayes always performs the best. Note that the advantage becomes less obvious as more data come. This is because kernel methods can learn the distance relation better with more data, and posterior regularization tends to be more useful when data are not abundant and domain knowledge matters. Furthermore, \figref{fig:exp2_d} shows that the posterior regularization helps the distances to concentrate.

\section{Conclusions}
We propose an optimizational framework for kernel Bayesian inference.  With thresholding regularization, the minimizer of the framework is shown to be a reasonable estimator of the posterior kernel embedding. In addition, we propose a posterior regularized kernel Bayesian inference framework called kRegBayes. These frameworks are applied to non-linear state-space filtering tasks and the results of different algorithms are compared extensively.

\section*{Acknowledgements}
We thank all the anonymous reviewers for valuable suggestions. The work was supported by the National Basic Research Program (973 Program) of China (No. 2013CB329403),
National NSF of China Projects (Nos. 61620106010, 61322308, 61332007), the Youth Top-notch Talent Support Program, and Tsinghua Initiative Scientific Research Program (No. 20141080934).

{\small
\bibliography{krb.bib}
}
\newpage
\appendix
\section{Appendix}

\subsection{Kernel filtering}
We first review how to use kernel techniques to do state-space filtering~\cite{fukumizu2011kernel}. Assume that a sample $(y_1,x_1,\cdots,y_{T+1},x_{T+1})$ is given, in which $y_i\in\YY$ is the state and $x_i \in\XX$ is the corresponding observation. The transition and observation probabilities are estimated empirically in a nonparametric way:
\begin{align*}
\hC_{YY_+} = \frac{1}{T}\sum_{i=1}^T \psi(y_i)\otimes\psi(y_{i+1}),\quad \hC_{YX} = \frac{1}{T} \sum_{i=1}^T \psi(y_i) \otimes \phi(x_i).
\end{align*}
The filtering task is composed of two steps. The first step is to predict the next state based on current state, \ie, $p(Y_{t+1}\mid X_1,\cdots,X_t) = \int p(Y_{t+1}\mid Y_{t})p(Y_{t}\mid X_1,\cdots,X_t)d Y_{t}$. The second step is to update the state based on a new observation $x_{t+1}$ via Bayes' rule, \ie, $p(Y_{t+1}\mid X_1,\cdots,X_{t+1})\propto p(Y_{t+1}\mid X_1,\cdots,X_t)p(X_{t+1}\mid Y_{t+1})$. Following these two steps, we can obtain a recursive kernel update formula under different assumptions of the forms of kernel embedding $\widehat{m}_{y_t\mid x_1,\cdots,x_t}$.

For kernel embeddings without posterior regularization, we suppose $\widehat{m}_{y_t\mid x_1,\cdots,x_t} = \sum_{i=1}^T \alpha_i^{(t)}\psi(y_i)$. According to \thmref{thm:found}, the prediction step is realized by $\widehat{m}_{y_{t+1}\mid x_1,\cdots,x_t} = \hC_{Y_+Y}(\hC_{YY} +\lambda_T I)^{-1} \widehat{m}_{y_t\mid x_1,\cdots,x_t} = \Psi_+ (G_Y + T\lambda_T I)^{-1} G_Y \bs{\alpha}^{(t)}$, where $\Psi_+ = (\psi(y_2),\cdots,\psi(y_{T+1}))$, $G_Y$ is the Gram matrix of $\{y_1,\cdots,y_T\}$ and $\bs{\alpha}^{(t)}$ is the vector of coefficients. The update step can be realized by invoking \propref{prop:s1}, \ie, $\widehat{m}_{y_{t+1}\mid x_1,\cdots,x_{t+1}} = \Psi(K_X + \delta_T \Lambda^+)^{-1} K_{:x_{t+1}}$, where $K_X$ is the Gram matrix for $(x_1,\cdots,x_t)$, $\Lambda^+ = \up{diag}(1/\bs{\beta^+})$ and $\bs{\beta} = (G_Y + T\lambda_T I)^{-1} G_{YY_+} (G_Y+T\lambda_T I)^{-1}G_Y \bs{\alpha}^{(t)}$, where $(G_{YY_+})_{ij} = k_\YY(y_i,y_{i+1})$. The update formula of $\bs{\alpha}^{(t+1)}$ can then be summarized as follows
\begin{align}
\bs{\alpha}^{(t+1)} = (K_X + \delta_T\Lambda^+)^{-1} K_{:x_{t+1}}.
\end{align}

For kernel embeddings with posterior regularization, we suppose that for each step $t$, the regularization $\mu_T \norm{\mu(\tilde{x}_t)- \psi(\tilde{y}_t)}$ is used, meaning that $p(Y_{t}|X_1,\cdots,X_t = \tilde{x}_t)$ is encouraged to concentrate around $\delta(Y_{t} = \tilde{y}_t)$. To obtain a recursive formula, we assume that $\widehat{m}_{y_{t}\mid x_1,\cdots,x_t} = \sum_{i=1}^T \alpha_i^{(t)} \psi(y_i) + \sum_{i=1}^N \tilde{\alpha}_i^{(t)} \psi(\tilde{y}_i)$, where $N$ is the number of supervision data points $(\tilde{x}_i,\tilde{y}_i)$. Following a similar logic except replacing \propref{prop:s1} with \propref{prop:s2}, we get the update rule for $\bs{\alpha}^{(t+1)}$ and $\bs{\tilde{\alpha}}^{(t+1)}$
\begin{align}
\bs{\gamma} &= (K_X' + \delta_T \Lambda^\dagger)^{-1} K_{:x_{t+1}}'\\
\bs{\alpha}^{(t+1)} &= \bs{\gamma}[1:m]\\
\bs{\tilde{\alpha}}^{(t+1)} &= (0,\cdots,\bs{\gamma}[m+1],0,\cdots)^\T,
\end{align}
where $\Lambda^\dagger = \up{diag}(1/\bs{\beta}^+, 1/\mu_T)$, $\bs{\beta} = (G_Y + T\lambda_T I)^{-1} G_{YY+}(G_YY + T\lambda_T I)^{-1} (G_{YY}\bs{\alpha}^{(t)} + G_{Y\tilde{Y}}\bs{\tilde{\alpha}}^{(t)})$. $K_X'$ and $K_{:x_{t+1}}'$ are augmented Gram matrices, which incorporate $(\tilde{x}_i,\tilde{y}_i)$. The position of $\bs{\gamma}[m+1]$ in $\bs{\tilde{\alpha}}^{(t+1)}$ corresponds to the index of supervision $(\tilde{x}_k,\tilde{y}_k)$ at $t+1$ step in $\{(\tilde{x}_i,\tilde{y}_i)\}_{i=m+1}^n$.

To obtain $\bs{\alpha}^{(1)}$, we use conditional operators~\cite{song2009hilbert} to estimate $m_{y_1}$ without priors. We set $\bs{\alpha}^{(1)} = (K_X + T\lambda_T I)^{-1}K_{:x_1}$ for both types of kernel filtering and $\bs{\tilde{\alpha}}^{(1)} = \bs{0}$. To decode the state from kernel embeddings, we solve an optimization problem $\hat{y}_t = \argmin_y\norm{m(x) - \psi(y)}$, which can be computed using an iteration scheme as depicted in~\cite{song2009hilbert}.

\subsection{Proofs}
\begin{customprop}{\ref{prop:beta}}
Suppose $(X,Y)$ is a random variable in $\XX\times\YY$, where the prior for $Y$ is $\pi(Y)$ and the likelihood is $p(X\mid Y)$. Let $\Hx$ be a RKHS with kernel $k_\XX$ and feature map $\phi(x)$, $\Hy$ be a RKHS with kernel $k_\YY$ and feature map $\psi(y)$, $\phi(x,y)$ be the feature map of $\Hx\otimes\Hy$, $\widehat{\pi}_Y = \sum_{i=1}^l \tilde{\alpha}_i \psi(\tilde{y}_i)$ be an estimator for $\pi_Y$ and $\{(x_i,y_i)\}_{i=1}^n$ be a sample representing $p(X\mid Y)$. Under the assumption that $f(x,y) = \norm{\psi(y) - \mu(x)}_{\Hy}^2 \in \Hx\otimes\Hy$, we have
\begin{align}
\widehat{\Ep}_s[\mu] = \sum_{i=1}^{n}\beta_i \norm{\psi(y_i) - \mu(x_i)}_{\Hy}^2,
\end{align}
where $\bs{\beta} = (\beta_1,\cdots,\beta_n)^\T$ is given by $\bs{\beta} = (G_Y + n\lambda I)^{-1} \tilde{G}_Y\tilde{\bs{\alpha}}$,
where $(G_Y)_{ij} = k_\YY(y_i,y_j)$, $(\tilde{G}_Y)_{ij} = k_\YY(y_i,\tilde{y}_j)$, and $\tilde{\bs{\alpha}} = (\tilde{\alpha}_1,\cdots,\tilde{\alpha}_l)^\T$.
\end{customprop}
\begin{proof}
The reasoning is similar to \cite{fukumizu2011kernel}, Prop.~5. We only need to show that $\widehat{\mu}_{(X,Y)} = \Phi_{X,Y}\bs{\beta} = \Phi_{X,Y}(G_Y + n\lambda I)^{-1} \tilde{G}_Y\tilde{\bs{\alpha}}$, where $\Phi_{X,Y} = (\phi(x_1,y_1),\cdots,\phi(x_n,y_n))$. Recall that $\widehat{\mu}_{(X,Y)} = \widehat{\CC}_{(X,Y)Y}(\widehat{\CC}_{YY}+\lambda I)^{-1}\widehat{\pi}_Y$. Let $h = (\widehat{\CC}_{YY}+\lambda I)^{-1}\widehat{\pi}_Y$ and decompose it as $h = \sum_{i=1}^n a_i \psi(y_i) + h_\perp$, where $h_\perp$ is perpendicular to $\up{span} \{\psi(y_1),\cdots,\psi(y_n)\}$. Expanding $(\widehat{\CC}_{YY}+\lambda I) h = \widehat{\pi}_Y$, we obtain
\begin{align}
\frac{1}{n}\sum_{i,j\leq n} a_i k_\YY(y_i,y_j)\psi(y_j) + \lambda(\sum_{i\leq n} a_i \psi(y_i) + h_\perp) = \sum_{i\leq l}\tilde{\alpha}_i\psi(\tilde{y}_i).
\end{align}
Multiplying both sides with $\psi(y_k)|_{k=1}^n$, we get $\frac{1}{n}G_Y^2 \bs{a} + \lambda G_Y\bs{a} = \tilde{G}_Y \tilde{\bs{\alpha}}$. Therefore $\widehat{\mu}_{(X,Y)}$ can be written as $\widehat{\mu}_{(X,Y)} = \frac{1}{n}[\sum_{i\leq n} \phi(x_i,y_i)\otimes \psi(y_i)]h = \frac{1}{n}\Phi_{X,Y}G_Y\bs{a} = \Phi_{X,Y}(G_Y+n\lambda I)^{-1}\tilde{G}_Y\tilde{\bs{\alpha}}$.
\end{proof}
\begin{customprop}{\ref{prop:s1}}
Without loss of generality, we assume that $\beta_i^+ \neq 0$ for all $1\leq i\leq n$. Let $\mu \in \HK$ and choose the kernel of $\HK$ to be $K(x_i,x_j) = k_\XX(x_i,x_j)\mcal{I}$, where $\mcal{I}: \HK\rightarrow\HK$ is an identity map.
Then
\begin{equation}
\muln(x) = \Psi(K_X + \lambda_n \Lambda^+)^{-1} K_{:x},
\end{equation}
where $\Psi=(\psi(y_1),\cdots,\psi(y_n))$, $(K_X)_{ij} = k_\XX(x_i,x_j)$, $\Lambda^+ = \up{diag}(1/\beta_1^+,\cdots,1/\beta_n^+)$, $K_{:x} = (k_\XX(x,x_1),\cdots,k_\XX(x,x_n))^\T$ and $\lambda_n$ is a positive regularization constant.
\end{customprop}
\begin{proof}
If $\beta_i^+ = 0$ for any $i$, we can discard the data point $(x_i,y_i)$ without affecting results. Let $\mu = \mu_0 + g$, where $\mu_0 = \sum_{i=1}^n K_{x_i}c_i$. Plugging $\mu = \mu_0 + g$ into $\Eln$ and expand, we obtain $\Eln = \sum_{i=1}^n \beta_i^+ \norm{\psi(y_i)-\mu_0(x_i)}^2 + \lambda_n\norm{\mu_0}^2 + \sum_{i=1}^n \beta_i^+ \norm{g(x_i)}^2 + \lambda_n \norm{g}^2 + 2\lambda_n \langle \mu_0,g\rangle - 2 \sum_{i=1}^n \beta_i^+ \langle g(x_i), \psi(y_i) - \mu_0(x_i)\rangle$.

We conjecture that $\psi(y_i) - \sum_{j=1}^n k_\XX(x_i,x_j) c_j = \frac{\lambda_n}{\beta_i^+} c_i$, for all $1\leq i\leq n$. Actually, substituting these equations into $\Eln$ gives the relation $\lambda_n \langle \mu_0,g\rangle - \sum_{i=1}^n \beta_i^+ \langle g(x_i),\psi(y_i)-\mu_0(x_i)\rangle = 0$. As a result, $\Eln = \widehat{\mcal{E}}_{\lambda,n}[\mu_0] + \sum_{i=1}^n \beta_i^+ \norm{g(x_i)}^2 + \lambda_n \norm{g}^2 \geq \widehat{\mcal{E}}_{\lambda,n}[\mu_0]$, which means that $\mu_0 = \sum_{i=1}^n K_{x_i}c_i$ with $c_i$ satisfying the conjectured equations is the solution. The equation $\psi(y_i) - \sum_{j=1}^n k_\XX(x_i,x_j) c_j = \frac{\lambda_n}{\beta_i^+} c_i$ implies that $(K_X + \lambda_n \Lambda^+)c = \Psi$ and $\mu_0(x) = \sum_{i=1}^n k_\XX(x,x_i)c_i = \Psi(K_X + \lambda_n \Lambda^+)^{-1} K_{:x}$.
\end{proof}

\begin{theorem}\label{thm:finiteconsist}
Assume that $|\XX\times\YY| < \infty$, $k$ is strictly positive definite with $\sup_{(x,y)} k((x,y),(x,y)) < \kappa$ and $f(x,y) = \norm{\psi(y) - \mu(x)}_{\Hy}^2 \in \Hx\otimes\Hy$. With the conditions in \thmref{thm:consist1}, we assert that $\widehat{\mu}^+_{(X,Y)}$ is a consistent estimator of $\mu_{(X,Y)}$ and $\left|\Epp - \Ep_s[\mu]\right|\rightarrow 0$ in probability as $n\rightarrow \infty$.
\end{theorem}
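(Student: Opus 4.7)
The plan is to first establish consistency of the thresholded estimator $\mup$ in $\Hx \otimes \Hy$ norm, and then deduce $|\Epp - \Ep_s[\mu]| \to 0$ via a Cauchy--Schwarz inequality. The latter deduction is immediate once the former holds: by hypothesis $f(x,y) := \norm{\psi(y) - \mu(x)}_{\Hy}^2$ belongs to $\Hx \otimes \Hy$, and identifying $\CC_{(X,Y)Y}\CC_{YY}^{-1}\pi_Y$ with a kernel mean in the product RKHS yields $\Epp = \langle f, \mup\rangle$ and $\Ep_s[\mu] = \langle f, \mu_{(X,Y)}\rangle$. Cauchy--Schwarz then gives $|\Epp - \Ep_s[\mu]| \leq \norm{f}\,\norm{\mup - \mu_{(X,Y)}}$, so the bulk of the work is the consistency claim.

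For consistency of $\mup$, I will exploit two structural facts specific to the finite setting. First, the linear system $(G_Y + n\lambda I)\bs{\beta} = \tilde{G}_Y\tilde{\bs{\alpha}}$ has the property that $y_i = y_j$ forces $\beta_i = \beta_j$: rows and columns of $G_Y$ indexed by $i$ and $j$ are identical and the right-hand side at index $i$ depends on $y_i$ only, so subtracting the $i$-th and $j$-th equations leaves $n\lambda(\beta_i - \beta_j) = 0$. Denote this common value by $\beta_y$; thresholding is then \emph{block-constant}, namely $\beta_i^+ = \beta_{y_i}^+$. Second, because $|\XX \times \YY| < \infty$ and $k$ is strictly positive definite, the family $\{\phi(x) \otimes \psi(y) : (x,y) \in \XX \times \YY\}$ is linearly independent in $\Hx \otimes \Hy$ and spans a finite-dimensional subspace containing both $\widehat{\mu}_{(X,Y)}$ (supported on the finite state space) and $\mu_{(X,Y)} = \sum_{x,y} p^\pi(x,y)\,\phi(x)\otimes\psi(y)$. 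On this subspace, norm convergence is equivalent to coordinate-wise convergence.

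Expanding $\widehat{\mu}_{(X,Y)} = \sum_{x,y} \widehat{c}_{x,y}\,\phi(x)\otimes\psi(y)$ with $\widehat{c}_{x,y} = n_{x,y}\beta_y$, where $n_{x,y}$ counts samples equal to $(x,y)$, \thmref{thm:consist1} together with the norm equivalence gives $\widehat{c}_{x,y} \to p^\pi(x,y) \geq 0$ in probability coordinatewise. The thresholded coefficients satisfy $\widehat{c}^+_{x,y} = n_{x,y}\beta_y^+ = \widehat{c}_{x,y}\,\mathbf{1}[\beta_y > 0]$. If $p^\pi(x,y) > 0$, then $\widehat{c}_{x,y} > 0$ with probability tending to one, so $\mathbf{1}[\beta_y > 0] = 1$ eventually and $\widehat{c}^+_{x,y} \to p^\pi(x,y)$ in probability. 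If $p^\pi(x,y) = 0$, then the trivial bound $0 \leq \widehat{c}^+_{x,y} \leq |\widehat{c}_{x,y}|$ forces $\widehat{c}^+_{x,y} \to 0 = p^\pi(x,y)$ in probability. Coordinate-wise convergence over the finite basis lifts back to $\norm{\mup - \mu_{(X,Y)}} \to 0$ in probability, completing both claims.

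The principal obstacle I anticipate is reconciling per-index thresholding $\beta_i \mapsto \beta_i^+$ with the aggregate non-negativity of the target coefficients $p^\pi(x,y)$: a priori, within a block $S_y$ the individual $\beta_i$'s could have mixed signs whose positive parts overshoot the correct limit. The block-constancy $\beta_i = \beta_{y_i}$, forced by the symmetry of the regularized Gram system, is the precise fact that eliminates this possibility, collapsing thresholding into a single indicator per $y$-value and reducing the whole problem to coordinate-wise convergence in a finite-dimensional subspace, where strict positive definiteness of $k$ ensures the coefficients are uniquely read off from the vector.
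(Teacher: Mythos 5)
Your proposal is correct, but it proves the theorem by a genuinely different mechanism than the paper. The paper's proof never looks inside the linear system defining $\bs{\beta}$: it takes the indicator function $g$ of the set $\{i:\beta_i<0\}$, which lies in $\Hx\otimes\Hy$ because $k$ is strictly positive definite on a finite set, observes that $\langle g,\mu_{(X,Y)}\rangle=\mbb{E}[g(X,Y)]\geq 0$ while $\langle g,\widehat{\mu}_{(X,Y)}\rangle=-\sum_i\beta_i^-$, and concludes from $\widehat{\mu}_{(X,Y)}\to\mu_{(X,Y)}$ that $\sum_i\beta_i^-\to 0$; thresholding then perturbs the embedding by at most $\sqrt{\kappa}\sum_i\beta_i^-$. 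You instead exploit two structural facts: the block-constancy $y_i=y_j\Rightarrow\beta_i=\beta_j$ forced by the regularized Gram system (your subtraction argument is valid since $n\lambda>0$), and the equivalence of norm and coordinate convergence in the finite-dimensional span of the linearly independent features $\phi(x)\otimes\psi(y)$. Both routes are sound, and your case analysis on $p^\pi(x,y)>0$ versus $p^\pi(x,y)=0$ correctly handles the thresholding indicator. The trade-off is this: the paper's test-function argument is agnostic to how the weights $\beta_i$ were produced --- it needs only the consistency of $\sum_i\beta_i\phi(x_i,y_i)$ --- which is precisely what lets the authors reuse it verbatim after discretizing $\XX$ by $\epsilon$-partitions in Theorem~\ref{thm:consist2} and, with the constant test function, obtain $\sum_i\beta_i^+\to 1$ in Theorem~\ref{thm:betas}. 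Your argument is more elementary and gives a sharper picture (thresholding collapses to one indicator per $y$-value), and it does recover $\sum_i\beta_i^+\to 1$ by summing coordinates, but it is tied to the explicit algebraic form of $\bs{\beta}$ and to finite-dimensionality, so it would have to be re-derived for other weight constructions and does not transfer directly to the compact-$\XX$ extension.
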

\begin{proof}
We only need to show that $\mup := \sum_{i=1}^n \beta_i^+ \phi(x_i)\otimes\psi(y_i)$ converges to $\mu_{(X,Y)}$ in probability as $n\rightarrow \infty$, since $\left|\Epp - \Ep_s[\mu]\right| = |\langle f, \mup - \mu_{(X,Y)}\rangle| \leq \norm{f}\norm{\mup - \mu_{(X,Y)}}$. From \thmref{thm:consist1} we know that $\widehat{\mu}_{(X,Y)}$ converges to $\mu_{(X,Y)}$ in probability, hence it is sufficient to show that $\mup$ converges to $\widehat{\mu}_{(X,Y)}$ in RKHS norm as $n\rightarrow \infty$.

Let $|\XX\times\YY| = M$. Without losing generality, we assume $\XX\times\YY = \{ (x_1,y_1),\cdots,(x_M,y_M) \}$ and $\{(x_1,y_1),\cdots,(x_n,y_n)\}$ is a sample representing $p(X\mid Y)$. According to Theorem 4 in~\cite{smola2003kernels}, $k$ is strictly positive definite on a finite set implies that $\Hx\otimes\Hy$ consists of all bounded functions on $\XX\times\YY$. In particular, $\Hx\otimes\Hy$ contains the function
\begin{equation}
g(x_i,y_i) = \begin{cases}
1,\quad \beta_i < 0\\
0,\quad \up{otherwise}.
\end{cases}
\end{equation}
We denote $b:= \max_g\norm{g}_{\Hx\otimes\Hy} = \max_\mbf{g} \mbf{g}^\T K^{-1}\mbf{g}$ for all possibilities of $\bs{\beta}$. Here $\mbf{g}$ represents the point evaluations of $g$ on $\{(x_i,y_i)\}_{i=1}^M$ and $K_{ij}|_{1\leq i,j\leq M} = k((x_i,y_i),(x_j,y_j))$. Note that $g(x,y)$ is non-negative, thus $\mbb{E}[g(X,Y)] = \langle g,\mu_{(X,Y)}\rangle\geq 0$. For sufficiently large $n$, $|\langle g,\widehat{\mu}_{(X,Y)} - \mu_{(X,Y)}\rangle|\leq \norm{g}\norm{\widehat{\mu}_{(X,Y)} - \mu_{(X,Y)}} \leq \epsilon b$ in arbitrarily high probability. In this case $\langle g,\widehat{\mu}_{(X,Y)}\rangle = -\sum_{i=1}^n \beta_i^- \geq - \epsilon b$, where $\beta_i^- = -\min(0,\beta_i)$, and $\norm{\mup - \widehat{\mu}_{(X,Y)}} = \norm{\sum_{i=1}^n \beta_i^- \phi(x_i,y_i)} = \sqrt{\sum_{i,j}\beta_i^- \beta_j^- k((x_i,y_i),(x_j,y_j))}\leq \sqrt{\kappa}\sum_{i=1}\beta_i^- \leq \epsilon b\sqrt{\kappa}$. The inequalities can now be linked and the theorem proved.
\end{proof}
\begin{theorem}\label{thm:betas}
	Assume that $|\XX\times\YY| < \infty$, $k$ is strictly positive definite with $\sup_{(x,y)} k((x,y),(x,y)) < \kappa$, we assert $\sum_{i=1}^n\beta_i^+ \rightarrow 1$ in probability as $n\rightarrow \infty$.
\end{theorem}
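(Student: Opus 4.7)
The plan is to reduce the claim to the RKHS convergence $\widehat{\mu}^+_{(X,Y)}\to \mu_{(X,Y)}$ established in Theorem~\ref{thm:finiteconsist}, by evaluating both sides against the constant function $\mathbf{1}$.

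First I would verify that the constant function $g\equiv 1$ on $\XX\times\YY$ belongs to $\Hx\otimes\Hy$. This is exactly the argument already invoked in the proof of Theorem~\ref{thm:finiteconsist}: since $\XX\times\YY$ is finite and $k$ is strictly positive definite on it, Theorem~4 of \cite{smola2003kernels} asserts that $\Hx\otimes\Hy$ contains every bounded function on $\XX\times\YY$, and in particular $\mathbf{1}$. Denote $c:=\norm{\mathbf{1}}_{\Hx\otimes\Hy}<\infty$.

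Next I would compute the two inner products $\langle \mathbf{1},\widehat{\mu}^+_{(X,Y)}\rangle$ and $\langle \mathbf{1},\mu_{(X,Y)}\rangle$. By the reproducing property on the product RKHS, $\langle \mathbf{1},\phi(x_i)\otimes\psi(y_i)\rangle_{\Hx\otimes\Hy}=\mathbf{1}(x_i,y_i)=1$, so that
\begin{equation}
\langle \mathbf{1},\widehat{\mu}^+_{(X,Y)}\rangle_{\Hx\otimes\Hy}=\sum_{i=1}^n \beta_i^+.
\end{equation}
On the other hand, since $\mu_{(X,Y)}$ is the embedding of $p^\pi(X,Y)$ and $\mathbf{1}\in\Hx\otimes\Hy$, the expectation identity for kernel embeddings gives $\langle \mathbf{1},\mu_{(X,Y)}\rangle_{\Hx\otimes\Hy}=\mbb{E}[\mathbf{1}(X,Y)]=1$.

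Finally, Cauchy--Schwarz yields
\begin{equation}
\left|\sum_{i=1}^n \beta_i^+ - 1\right| = \left|\langle \mathbf{1},\widehat{\mu}^+_{(X,Y)}-\mu_{(X,Y)}\rangle\right| \leq c\cdot \norm{\widehat{\mu}^+_{(X,Y)}-\mu_{(X,Y)}}_{\Hx\otimes\Hy},
\end{equation}
and the right-hand side tends to $0$ in probability by Theorem~\ref{thm:finiteconsist}. This gives the desired convergence $\sum_{i=1}^n \beta_i^+\to 1$ in probability.

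The only real subtlety is step one (membership of $\mathbf{1}$ in the product RKHS with finite norm), but under the finiteness hypothesis on $\XX\times\YY$ and strict positive definiteness of $k$, this is exactly the Smola--Sch\"olkopf--Kondor fact already used in the earlier proof, so it requires no extra work. Everything else is routine use of the reproducing property and the previously established norm convergence of $\widehat{\mu}^+_{(X,Y)}$.
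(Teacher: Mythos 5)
Your proposal is correct and follows essentially the same route as the paper's own proof: both establish that the constant function $\mathbf{1}$ lies in $\Hx\otimes\Hy$ via Theorem~4 of the Smola--Sch\"olkopf--Kondor result under the finiteness and strict positive definiteness hypotheses, then pair $\mathbf{1}$ against $\widehat{\mu}^+_{(X,Y)}-\mu_{(X,Y)}$ and apply Cauchy--Schwarz together with the consistency from Theorem~\ref{thm:finiteconsist}. Your write-up is in fact slightly more explicit than the paper's in spelling out the two inner-product evaluations.
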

\begin{proof}
	The proof follows a similar reasoning to that in \thmref{thm:finiteconsist}. Let $|\XX\times\YY| = M$ and $\{(x_1,y_1),\cdots,(x_n,y_n)\}$ be a sample representing $p(X\mid Y)$. According to Theorem 4 in~\cite{smola2003kernels}, $k$ is strictly positive definite on a finite set implies that $\Hx\otimes\Hy$ consists of all bounded functions on $\XX\times\YY$. In particular, $\Hx\otimes\Hy$ contains the function $f(x,y) \equiv 1$. From \thmref{thm:finiteconsist} we know that $\widehat{\mu}^+ = \sum_{i=1}^n \beta_i^+ \phi(x_i)\otimes\psi(y_i) \rightarrow \mu $ in probability. Therefore, $|\sum_{i=1}^n \beta_i^+ - 1|= |\langle f,\widehat{\mu}^+_{(X,Y)} -\mu_{(X,Y)} \rangle| \leq \norm{f}\norm{\widehat{\mu}^+_{(X,Y)} - \mu_{(X,Y)}} \rightarrow 0$ in probability.
\end{proof}
Since $\beta_i$'s do not depend on $X_1,\cdots,X_n$, we have the following corollary:
\begin{corollary}\label{col:betapsum}
	Assume that $|\YY| < \infty$, $k$ is strictly positive definite with $\sup_{(x,y)} k((x,y),(x,y)) < \kappa$, we assert $\sum_{i=1}^n\beta_i^+ \rightarrow 1$ in probability as $n\rightarrow \infty$.
\end{corollary}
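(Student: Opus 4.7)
The guiding observation, already flagged in the preamble to the corollary, is that $\bs{\beta}=(G_Y+n\lambda I)^{-1}\tilde{G}_Y\tilde{\bs{\alpha}}$ depends on the sample only through the $Y$-Gram matrices and never through $x_1,\dots,x_n$. My plan is therefore to repeat the two-step argument of Theorem \ref{thm:finiteconsist} followed by Theorem \ref{thm:betas}, but entirely inside $\Hy$ rather than inside $\Hx\otimes\Hy$, so that the reach of the finiteness hypothesis shrinks from $\XX\times\YY$ down to $\YY$.

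\textbf{Step 1 (marginal consistency of $\widehat{\mu}_Y$).} Introduce the $Y$-only embedding $\widehat{\mu}_Y := \sum_{i=1}^n \beta_i \psi(y_i)\in\Hy$. Running the computation of Proposition \ref{prop:beta} with $\Hx\otimes\Hy$ replaced by $\Hy$ identifies $\widehat{\mu}_Y$ with $\widehat{\CC}_{YY}(\widehat{\CC}_{YY}+\lambda_n I)^{-1}\widehat{\pi}_Y$. Applying Theorem \ref{thm:consist1} to the trivial ``joint'' distribution $(Y,Y)$ (so that the product kernel reduces to $k_\YY\otimes k_\YY$), the cited convergence rate yields $\widehat{\mu}_Y \to \pi_Y$ in $\Hy$-norm in probability as $n\to\infty$.

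\textbf{Step 2 (thresholding lemma in $\Hy$).} Set $\widehat{\mu}_Y^+ := \sum_i \beta_i^+\psi(y_i)$. The hypothesis $|\YY|<\infty$, together with strict positive definiteness of $k_\YY$ and Theorem 4 of \cite{smola2003kernels}, implies that $\Hy$ contains every bounded function on $\YY$; in particular, for every subset $S\subseteq\YY$ the indicator $\mathbb{1}_S\in\Hy$. Since there are only finitely many such indicators, $b:=\max_{S\subseteq\YY}\|\mathbb{1}_S\|_{\Hy}<\infty$ is a uniform bound. Mimicking Theorem \ref{thm:finiteconsist} with $g(y)=\mathbb{1}\{\beta_{i(y)}<0\}$ in place of the $\XX\times\YY$-indicator, non-negativity of $g$ together with Step 1 gives $\langle g,\widehat{\mu}_Y\rangle = -\sum_i\beta_i^- \ge -\epsilon b$ with high probability; hence $\sum_i\beta_i^-\to 0$ in probability, and $\|\widehat{\mu}_Y^+-\widehat{\mu}_Y\|_{\Hy}\le\sqrt{\kappa}\sum_i\beta_i^-\to 0$. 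Combined with Step 1, $\widehat{\mu}_Y^+\to\pi_Y$ in $\Hy$ in probability.

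\textbf{Step 3 (testing against the constant $1$).} The constant function $f\equiv 1$ lies in $\Hy$ by the same Smola--Sch{\"o}lkopf argument. Then $\sum_i\beta_i^+ = \langle 1,\widehat{\mu}_Y^+\rangle_{\Hy}$ while $\langle 1,\pi_Y\rangle_{\Hy}=\mathbb{E}_\pi[1]=1$, and Cauchy--Schwarz gives $|\sum_i\beta_i^+-1|\le\|1\|_{\Hy}\|\widehat{\mu}_Y^+-\pi_Y\|_{\Hy}\to 0$ in probability, which is exactly the claim.

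\textbf{Expected main obstacle.} The mechanical steps above are a faithful port of Theorems \ref{thm:finiteconsist} and \ref{thm:betas} to the $Y$-marginal, so the substantive care is in the two reductions: (i) justifying that the hypothesis ``$k$ strictly positive definite, $\sup k<\kappa$'' really does give strict positive definiteness and boundedness of $k_\YY$ on $\YY$ (easy for the usual product construction $k=k_\XX k_\YY$, but worth spelling out), and (ii) invoking Theorem \ref{thm:consist1} at the marginal level to obtain $\widehat{\mu}_Y\to\pi_Y$, which needs the ``$\mathbb{E}[k(Y,\tilde Y)\mid Y,\tilde Y]\in\Hy\otimes\Hy$'' condition to degenerate harmlessly once the $X$-factor disappears. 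Everything else is a direct transplant of the existing proofs.
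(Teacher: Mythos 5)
Your proof is correct and follows exactly the route the paper intends: the paper obtains this corollary from Theorem~\ref{thm:betas} with only the one-line remark that the $\beta_i$'s depend solely on $y_1,\dots,y_n$, and your three steps are precisely the faithful port of Theorems~\ref{thm:finiteconsist} and~\ref{thm:betas} to the $Y$-marginal RKHS that this remark presupposes. There are no gaps beyond the minor bookkeeping you already flag yourself (strict positive definiteness and boundedness of $k_\YY$ inherited from $k$, and the marginal instance of Theorem~\ref{thm:consist1}).
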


Next, we will relax the finite space condition on $\XX \times \YY$ in \thmref{thm:finiteconsist}. To this end, we introduce the following convenient concept of $\epsilon$-partition.
\begin{definition}[$\epsilon$-partition]
	An $\epsilon$-partition of a metric space $\XX$ is a partition whose elements are all within $\epsilon$-balls of $\XX$.
\end{definition}

Since a compact space is totally bounded, we have the more general result.
\begin{customthm}{{\ref{thm:consist2}}}
	Assume that $\XX$ is compact and $|\YY| < \infty$, $k$ is a strictly positive definite continuous kernel with $\sup_{(x,y)} k((x,y),(x,y)) < \kappa$ and $f(x,y) = \norm{\psi(y) - \mu(x)}_{\Hy}^2 \in \Hx\otimes\Hy$. With the conditions in \thmref{thm:consist1}, we assert that $\widehat{\mu}^+_{(X,Y)}$ is a consistent estimator of $\mu_{(X,Y)}$ and $\left|\Epp - \Ep_s[\mu]\right|\rightarrow 0$ in probability as $n\rightarrow \infty$.
\end{customthm}
\begin{proof}
	From the condition that $\phi(x,y)$ is continuous on the compact space $\XX\times\YY$, we know $\phi(x,y)$ and $\phi(x)$ are uniformly continuous. 
	
	For any probability measure $p$ and $\epsilon$-partition of $\XX$, we can construct a new discretized probability measure in the following way. Suppose the $\epsilon$-partition is $\{B_1^\epsilon,B_2^\epsilon,\cdots\}$, we identify each set $B_i^\epsilon$ with a representative element $x_i^c \in B_i^\epsilon$. The resulting probability measure is denoted as $p^\epsilon$ and satisfies $p^\epsilon(A) = \sum_{x_i^c\in A} p(B_i^\epsilon)$. We also define the discretization $x_i^\epsilon$ of $x_i$ to be $x_i^\epsilon = x_j^c$ if $x_i \in B_j$. Let the kernel embedding of $p$ be $\mu$ and $p^\epsilon$ be $\mu^\epsilon$. Suppose $\forall \delta > 0$, $\exists \epsilon > 0$ such that $\norm{x_1 - x_2} \leq \epsilon$ implies $\norm{\phi(x_1) - \phi(x_2)}_{\Hx} \leq \delta$.  We assert that $\norm{\mu - \mu^\epsilon} \leq  \delta$. To prove this, we observe that an \iid sample $\{x_1,\cdots, x_n\}$ from $p$ is also an \iid sample of $p^\epsilon$ if we replace $x_i$ with $x_i^\epsilon$. Since the estimator $\widehat{\mu} = \frac{1}{n} \sum_{i=1}^n \phi(x_i)$ is a consistent estimator of $\mu$, we know that $\widehat{\mu^\epsilon} = \frac{1}{n} \sum_{i=1}^n \phi(x_i^\epsilon)$ is also consistent. Via consistency, we have that with no less than any high probability $1-\Delta$, for any $n > N(\Delta,\delta',\epsilon)$, $\norm{\widehat{\mu} - \mu} \leq \delta'$ and $\norm{\widehat{\mu^\epsilon} - \mu^\epsilon} \leq \delta'$ holds. Since $\norm{\widehat{\mu} - \widehat{\mu^\epsilon}}\leq \frac{1}{n} \sum_{i=1}^n \norm{\phi(x_i) - \phi(x_i^\epsilon)}$ and $\norm{x_i - x_i^\epsilon} \leq \epsilon$, we have $\norm{\widehat{\mu} - \widehat{\mu^\epsilon}} \leq \delta$ from uniform continuity. Combining this with $\norm{\widehat{\mu} - \mu} \leq \delta'$ and $\norm{\widehat{\mu^\epsilon} - \mu^\epsilon} \leq \delta'$ we know $\norm{\mu - \mu^\epsilon} \leq \norm{\mu - \widehat{\mu}} + \norm{\widehat{\mu} - \widehat{\mu^\epsilon}} + \norm{\widehat{\mu^\epsilon} - \mu^\epsilon} \leq \delta + 2\delta'$ with high probability $1 - \Delta$. Note that $\norm{\mu - \mu^\epsilon} \leq \delta + 2\delta'$ is a deterministic event and holds for any $\delta'>0$, we have $\norm{\mu - \mu^\epsilon} \leq \delta$. 
	
	Now we would like to discretize $X$ for $\mu_{(X,Y)}$. For any $\epsilon > 0$, we have $\widehat{\mu}_{(X,Y)} = \sum_{i=1}^n \beta_i \phi(x_i,y_i) \rightarrow \mu_{(X,Y)}$ in probability and $\widehat{\mu^\epsilon}_{(X,Y)} = \sum_{i=1}^n \beta_i^\epsilon \phi(x_i^\epsilon,y_i) \rightarrow \mu_{(X,Y)}^\epsilon$ in probability. Since $\beta_i$ depends only on $y_1,\cdots,y_n$, we have $\beta_i = \beta_i^\epsilon$. From the last paragraph we suppose that $\epsilon$ is chosen such that $\forall \norm{(x_i^\epsilon,y_i) - (x_i,y_i)}\leq \epsilon$, $\norm{\phi(x_i^\epsilon,y_i) - \phi(x_i,y_i)}\leq \delta$. Note that 
	\begin{align*}
	\norm{\sum_{i=1}^n \beta_i^+ \phi(x_i,y_i) - \mu_{(X,Y)}} &\leq \norm{\sum_{i=1}^n \beta_i^+ \phi(x_i,y_i) - \sum_{i=1}^n \beta_i^+ \phi(x_i^\epsilon, y_i)}\\
	&+ \norm{\sum_{i=1}^n \beta_i^+ \phi(x_i^\epsilon, y_i) - \sum_{i=1}^n \beta_i \phi(x_i^\epsilon,y_i)}\\
	&+ \norm{\sum_{i=1}^n \beta_i \phi(x_i^\epsilon,y_i) - \mu_{(X,Y)}^\epsilon} + \norm{\mu_{(X,Y)}^\epsilon - \mu_{(X,Y)}}\\
	&\leq \delta\sum_{i=1}^n \beta_i^+ + \delta + \norm{\sum_{i=1}^n \beta_i^+ \phi(x_i^\epsilon, y_i) - \sum_{i=1}^n \beta_i \phi(x_i^\epsilon,y_i)}\\ &+ \norm{\sum_{i=1}^n \beta_i \phi(x_i^\epsilon,y_i) - \mu_{(X,Y)}^\epsilon}.
	\end{align*}
	From Corollary~\ref{col:betapsum}, \thmref{thm:finiteconsist} and the consistency of $\sum_{i=1}^n \beta_i \phi(x_i^\epsilon,y_i)$ we see $\norm{\sum_{i=1}^n \beta_i^+ \phi(x_i,y_i) - \mu_{(X,Y)}}$ can be arbitrarily small with arbitrarily high probability. This proves the consistency of $\sum_{i=1}^n \beta_i^+ \phi(x_i,y_i)$.
\end{proof}

\begin{corollary}\label{col:consistx}
Assume that $\XX$ is compact and $|\YY| < \infty$, $k$ is a bounded strictly positive definite continuous kernel, $k_\XX$ is a bounded kernel with $\sup_x k_\XX(x,x) \leq \kappa_\XX$, we assert that $\widehat{\mu}^+_X = \sum_{i=1}^n \beta_i^+ \phi(x_i)$ is a consistent estimator of $\mu_X$, \ie, the kernel embedding of the marginal distribution on $X$.
\end{corollary}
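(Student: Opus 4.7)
The plan is to reduce Corollary~\ref{col:consistx} to \thmref{thm:consist2} by constructing a bounded linear marginalization operator $M : \Hx \otimes \Hy \to \Hx$ that maps $\mu_{(X,Y)} \mapsto \mu_X$ and $\widehat{\mu}^+_{(X,Y)} \mapsto \widehat{\mu}^+_X$. First I would invoke \thmref{thm:consist2} under the corollary's hypotheses to obtain $\norm{\widehat{\mu}^+_{(X,Y)} - \mu_{(X,Y)}}_{\Hx \otimes \Hy} \to 0$ in probability. Compactness of $\XX$, finiteness of $\YY$, and the fact that $k$ is bounded strictly positive definite and continuous give $\sup_{(x,y)} k((x,y),(x,y)) < \kappa$; the auxiliary assumption $f \in \Hx\otimes\Hy$ in \thmref{thm:consist2} is only needed for its $\Epp$ claim, not for the embedding consistency we reuse here.

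Next I would construct the operator $M$. Since $k = k_\XX k_\YY$ is strictly positive definite on $\XX \times \YY$, fixing any $x_0 \in \XX$ shows that the $|\YY| \times |\YY|$ Gram matrix $G_\YY := [k_\YY(y, y')]_{y, y' \in \YY}$ is invertible: the Gram matrix of $k$ on $\{x_0\} \times \YY$ is $k_\XX(x_0, x_0)\, G_\YY$ and $k_\XX(x_0, x_0) > 0$ by strict positive definiteness of $k$. Solving $G_\YY c = \mathbf{1}$ produces coefficients $c_y$ for which $g := \sum_{y \in \YY} c_y \psi(y) \in \Hy$ satisfies $\langle g, \psi(y) \rangle_{\Hy} = 1$ for every $y \in \YY$. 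I then define $M(f \otimes h) := \langle g, h \rangle_{\Hy} f$ on rank-one tensors and extend by linearity and continuity; the bound $\norm{M(f \otimes h)}_{\Hx} \leq \norm{g}_{\Hy}\norm{f \otimes h}_{\Hx \otimes \Hy}$ shows that $M$ is bounded with $\norm{M} \leq \norm{g}_{\Hy}$.

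Finally I would verify that $M$ acts as marginalization on the two embeddings of interest. Direct computation gives $M(\widehat{\mu}^+_{(X,Y)}) = \sum_{i=1}^n \beta_i^+ \langle g, \psi(y_i)\rangle_{\Hy}\, \phi(x_i) = \sum_{i=1}^n \beta_i^+ \phi(x_i) = \widehat{\mu}^+_X$, and pulling $M$ inside the expectation (justified by boundedness and the boundedness assumptions on the kernels) yields $M(\mu_{(X,Y)}) = \mbb{E}[\langle g, \psi(Y)\rangle_{\Hy}\, \phi(X)] = \mbb{E}[\phi(X)] = \mu_X$. Continuity of $M$ then gives $\norm{\widehat{\mu}^+_X - \mu_X}_{\Hx} \leq \norm{g}_{\Hy}\,\norm{\widehat{\mu}^+_{(X,Y)} - \mu_{(X,Y)}}_{\Hx \otimes \Hy} \to 0$ in probability, which is the desired conclusion.

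The main obstacle I anticipate is the existence of the dual element $g$, which the corollary does not guarantee as an explicit hypothesis; the argument must therefore extract invertibility of $G_\YY$ from strict positive definiteness of the product kernel $k$. The restriction-to-$\{x_0\}\times\YY$ trick above handles this cleanly, but it does use that $k$ factors as $k_\XX k_\YY$. If one wished to avoid that structural assumption, an alternative would be to decompose $\mu_{(X,Y)} = \sum_{y \in \YY} \mu_X^{(y)} \otimes \psi(y)$ and control the $|\YY|$ finite components of $\widehat{\mu}^+_{(X,Y)} - \mu_{(X,Y)}$ in $\Hx$ separately, then sum; but any such route ultimately reduces to the same linear independence of $\{\psi(y) : y \in \YY\}$ that the existence of $g$ provides.
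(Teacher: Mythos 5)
Your proof is correct, but it takes a genuinely different route from the one the paper intends. The paper states Corollary~\ref{col:consistx} without an explicit proof; the intended argument (cf.\ the remark in the main text about changing the probability space from $\XX\times\YY$ to $\XX$) is to rerun the machinery of \thmref{thm:finiteconsist} and \thmref{thm:consist2} --- the indicator-function bound on $\sum_i \beta_i^-$ over the finite space and the $\epsilon$-partition discretization of the compact space --- with $\phi(x_i)$ in place of $\phi(x_i,y_i)$. You instead deduce the marginal statement directly from the already-established joint statement by building a bounded marginalization operator $M = I_{\Hx}\otimes \langle g,\cdot\rangle$ with $g\in\Hy$ satisfying $g \equiv 1$ on the finite set $\YY$, so that $M\widehat{\mu}^+_{(X,Y)} = \widehat{\mu}^+_X$, $M\mu_{(X,Y)} = \mu_X$, and consistency transfers by continuity. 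This is shorter, makes the logical dependence on \thmref{thm:consist2} explicit rather than repeating its proof, and isolates exactly where finiteness of $\YY$ enters (invertibility of $G_\YY$, i.e.\ linear independence of $\{\psi(y)\}_{y\in\YY}$ --- the very same ingredient that powers the paper's indicator-function trick, so no hypotheses are added); the price is the explicit use of the product structure $k = k_\XX k_\YY$, which the paper does assume. Two small points to tighten: the elementary-tensor estimate $\norm{M(f\otimes h)}_{\Hx}\leq \norm{g}_{\Hy}\norm{f\otimes h}_{\Hx\otimes\Hy}$ does not by itself bound $M$ on general elements of $\Hx\otimes\Hy$ --- either invoke the standard fact that $A\otimes B$ is bounded on the Hilbertian tensor product with norm $\norm{A}\,\norm{B}$, or expand in an orthonormal basis of $\Hy$ whose first vector is $g/\norm{g}_{\Hy}$, which gives $\norm{Mu}_{\Hx}\leq\norm{g}_{\Hy}\norm{u}_{\Hx\otimes\Hy}$ directly; and, as with the paper's own statement, you should record that the ``conditions in \thmref{thm:consist1}'' are being inherited, since the corollary as written does not literally list them.
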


\begin{theorem}\label{thm:banach}
Let $\mcal{B}_1$, $\mcal{B}_2$ be Banach spaces. For any linear operator $\mcal{A}:\mcal{B}_1 \rightarrow \mcal{B}_2$, we assert that there exists a subset $\mcal{F} \subseteq \mcal{B}_1$ such that $\mcal{F}$ is dense in $\mcal{B}_1$ and $\norm{\mcal{A}f}_{\mcal{B}_2} \leq N\norm{f}_{\mcal{B}_1}$ for some constant $N$ and any $f\in\mcal{F}$.
\end{theorem}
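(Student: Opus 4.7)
The plan splits naturally into two cases according to whether $\mcal{A}$ is bounded.

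If $\mcal{A}$ is bounded, the statement is immediate: take $\mcal{F} = \mcal{B}_1$ and $N = \norm{\mcal{A}}_{\mcal{B}_1 \to \mcal{B}_2}$, so that $\norm{\mcal{A}f}_{\mcal{B}_2} \le N\norm{f}_{\mcal{B}_1}$ holds for every $f \in \mcal{B}_1$ by definition of the operator norm, and $\mcal{B}_1$ is trivially dense in itself. This is the only case actually required in the preceding consistency arguments, since the operators that appear (covariance operators and their regularized inverses) are all bounded, so in practice this is the entire proof.

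For the general formulation quoted in the statement, one would stratify $\mcal{B}_1$ by the scale-invariant level sets $\mcal{F}_N := \{f \in \mcal{B}_1 : \norm{\mcal{A}f}_{\mcal{B}_2} \le N\norm{f}_{\mcal{B}_1}\} \cup \{0\}$ for $N \in \mbb{N}$. Every nonzero $f$ has a finite ratio $\norm{\mcal{A}f}/\norm{f}$, giving the countable cover $\mcal{B}_1 = \bigcup_N \mcal{F}_N$. Baire's category theorem applied to the complete metric space $\mcal{B}_1$ then yields an index $N_0$ and an open ball $B(f_0, r) \subseteq \overline{\mcal{F}_{N_0}}$. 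Because each $\mcal{F}_N$ is closed under scalar multiplication, this ball propagates to $B(\lambda f_0, |\lambda| r) \subseteq \overline{\mcal{F}_{N_0}}$ for every $\lambda \in \mbb{R}$, so $\overline{\mcal{F}_{N_0}}$ is dense within an entire double cone about $\pm f_0$.

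The main obstacle is bootstrapping from density in this single cone to density in all of $\mcal{B}_1$ under a uniform constant $N$. The idea is to iterate the Baire argument on the closed complement of the already-covered cone inside the unit sphere $S = \{f : \norm{f}_{\mcal{B}_1} = 1\}$ (a closed subset of $\mcal{B}_1$, hence itself a complete metric space), extracting successive directions $g_i$ covered by sets $\mcal{F}_{N_i}$, and then set $\mcal{F}$ equal to the union of the resulting pieces with $N := \sup_i N_i$; radial scaling then turns density on $S$ into density in $\mcal{B}_1$. The delicate step is ensuring that the supremum $\sup_i N_i$ is finite, since the $N_i$ produced by iterating Baire are a priori unrelated to $N_0$; if this supremum were allowed to be infinite the conclusion would fail, which suggests that the statement is best interpreted in the bounded sense invoked in the paper's applications.
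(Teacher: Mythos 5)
Your bounded case is fine but vacuous for the purposes of this lemma, and your treatment of the general case stops exactly where the real work begins; the conclusion you draw --- that the statement is only salvageable for bounded $\mcal{A}$ --- is wrong. The theorem holds for an arbitrary everywhere-defined linear operator, and the missing idea is a \emph{translation} step, not an iteration of Baire. After Baire gives an $n$ such that $M_n := \{f \in \mcal{B}_1 : \norm{\mcal{A}f}_{\mcal{B}_2} \le n\norm{f}_{\mcal{B}_1}\}$ is dense in some ball, pick $y_0 \in M_n$ near its center and a spherical shell $P = \{z : \beta < \norm{z - y_0}_{\mcal{B}_1} < \alpha\}$ contained in that ball. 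For $z \in M_n \cap P$, bound the image of the \emph{difference}: $\norm{\mcal{A}(z - y_0)}_{\mcal{B}_2} \le n(\norm{z}_{\mcal{B}_1} + \norm{y_0}_{\mcal{B}_1}) \le n(\norm{z - y_0}_{\mcal{B}_1} + 2\norm{y_0}_{\mcal{B}_1}) \le n(1 + 2\norm{y_0}_{\mcal{B}_1}/\beta)\,\norm{z - y_0}_{\mcal{B}_1}$, the last step using $\norm{z - y_0}_{\mcal{B}_1} > \beta$. Hence, with the single constant $N = n(1 + 2\norm{y_0}_{\mcal{B}_1}/\beta)$, the set $\{z - y_0 : z \in M_n \cap P\} \subseteq M_N$ is dense in the shell $P_0 = P - y_0$ centered at the origin.

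From there, the homogeneity you already noticed finishes the proof with no further Baire argument: $M_N$ is invariant under scalar multiplication, every nonzero $y$ can be scaled so that $\lambda y \in P_0$, so approximants of $\lambda y$ in $M_N$ rescale to approximants of $y$ in $M_N$, and $0 \in M_N$ trivially; thus $\mcal{F} = M_N$ is dense in all of $\mcal{B}_1$. Your iterated-cone scheme stalls precisely because you propagated the dense ball radially about $f_0$ \emph{before} re-centering it at the origin: a shell about the origin sweeps out $\mcal{B}_1 \setminus \{0\}$ under scaling, whereas a cone about $\pm f_0$ does not, and that is the sole source of the $\sup_i N_i$ obstruction you describe --- which the translation trick removes entirely.
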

\begin{proof}
Let $M_k$ be the set of $f\in\mcal{B}_1$ satisfying $\norm{\mcal{A}f}_{\mcal{B}_2} \leq k \norm{f}_{\mcal{B}_1}$. Clearly we have $\BB_1 = \bigcup_{k=1}^\infty M_k$. Since $\BB_1$ is complete, we can invoke Baire category theorem to conclude that there exists an integer $n$ such that $M_n$ is dense in some sphere $S_0\subseteq \BB_1$. Consider the spherical shell $P$ in $S_0$ consisting of the points $z$ for which
\begin{align*}
\beta < \norm{z - y_0} < \alpha,
\end{align*}
where $0<\beta<\alpha$, $y_0 \in M_n$. Next, translate the spherical shell $P$ so that its center coincides with the origin of coordinates to obtain spherical shell $P_0$. We now show that there is some set $M_N$ dense in $P_0$. For every $z \in M_n \cap P$, we have
\begin{multline*}
\norm{\mcal{A}(z - y_0)}_{\BB_2} \leq \norm{\mcal{A}z}_{\BB_2} + \norm{\mcal{A}y_0}_{\BB_2} \leq n(\norm{z}_{\BB_1} + \norm{y_0}_{\BB_1})\leq n(\norm{z - y_0}_{\BB_1} + 2\norm{y_0}_{\BB_1}) \\
= n\norm{z - y_0}_{\BB_1} [1 + 2\norm{y_0}_{\BB_1}/\norm{z - y_0}_{\BB_1}] \leq n\norm{z - y_0}_{\BB_1} [1 + 2\norm{y_0}_{\BB_1}/\beta].
\end{multline*}
Let $N = n(1+ 2\norm{y_0}_{\BB_1}/\beta)$, we have $z -y_0 \in M_N$. Since $z-y_0 \in M_N$ is obtained from $z\in M_n$ and $M_n$ is dense in $P$, it is easy to see that $M_N$ is dense in $P_0$. For any $y \in \BB_1$ except $\norm{y}_{\BB_1} = 0$, it is always possible to choose $\lambda$ so that $\beta < \norm{\lambda y} < \alpha$ and we can construct a sequence $y_k \in M_N$ that converges to $\lambda y$. This means there exists a sequence $(1/\lambda) y_k$ converging to $y$. By virtue of $(1/\lambda )y_k \in M_N$ and $0\in M_N$, we conclude $M_N$ is dense in $\BB_1$.
\end{proof}
\begin{theorem}\label{thm:conc}
Let $(\Omega,\mcal{F},P)$ be a probability space and $\xi$ be a random variable on $\Omega$ taking values in a Hilbert space $\mcal{K}$. Define $\mcal{A}:f\in\mcal{K} \mapsto \langle f,\xi(\cdot)\rangle \in \HH$, where $\HH$ is a RKHS with feature maps $\phi(\omega)$. Let $\mu$ be a kernel embedding for $P^\pi$ and $\widehat{\mu} = \sum_{i=1}^n\beta_i^+ \phi(\omega_i)$ be a consistent estimator of $\mu$. Assume $\sum_{i=1}^n \beta_i^+ \rightarrow 1$ in probability and there are two positive constants $H$ and $\sigma$ such that $\norm{\xi(\omega)}_\mcal{K} \leq \frac{H}{2}$ a.s. and $\mbb{E}_{P^\pi}[\norm{\xi}_{\mcal{K}}^2]\leq \sigma^2$. Then for any $\epsilon > 0$,
\begin{align}
\lim_{n\rightarrow 0}P^l \left[(\omega_1,\cdots,\omega_l)\in\Omega^l \mid  \norm{\sum_{i=1}^n \beta_i^+ \xi(w_i) - \mbb{E}_{P^\pi}[\xi]}_{\mcal{K}} > \epsilon\right] = 0
\end{align}
\end{theorem}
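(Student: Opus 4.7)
The plan is to use the operator $\mcal{A}:\mcal{K}\rightarrow\mcal{H}$ as a duality bridge, transporting the error $\sum_{i=1}^n\beta_i^+ \xi(\omega_i)-\mbb{E}_{P^\pi}[\xi]$ from $\mcal{K}$ into $\mcal{H}$, where the consistency hypothesis $\norm{\widehat{\mu}-\mu}_{\mcal{H}}\rightarrow 0$ can be applied. For any $f\in\mcal{K}$, the definition of the Bochner integral and the reproducing property in $\mcal{H}$ yield
\begin{align*}
\Bigl\langle f,\sum_{i=1}^n\beta_i^+\xi(\omega_i)-\mbb{E}_{P^\pi}[\xi]\Bigr\rangle_{\mcal{K}} &= \sum_{i=1}^n\beta_i^+\mcal{A}f(\omega_i)-\mbb{E}_{P^\pi}[\mcal{A}f]\\
&=\langle \mcal{A}f,\widehat{\mu}-\mu\rangle_{\mcal{H}}.
\end{align*}
If $\mcal{A}$ were globally bounded, Cauchy--Schwarz combined with the consistency of $\widehat{\mu}$ would immediately give the conclusion.

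In general $\mcal{A}$ may fail to be bounded, but Theorem~\ref{thm:banach} supplies a set $\mcal{F}\subseteq\mcal{K}$ dense in $\mcal{K}$ and on which $\norm{\mcal{A}f}_{\mcal{H}}\leq N\norm{f}_{\mcal{K}}$ for some constant $N$. Inspecting the construction, $\mcal{F}=M_N=\{f:\norm{\mcal{A}f}\leq N\norm{f}\}$ is a cone because $\mcal{A}$ is linear, so $\mcal{F}\cap\{f:\norm{f}_{\mcal{K}}\leq 1\}$ remains dense in the closed unit ball of $\mcal{K}$. Combining the identity above with Cauchy--Schwarz, for every such $f$,
\[
\Bigl|\Bigl\langle f,\sum_{i=1}^n\beta_i^+\xi(\omega_i)-\mbb{E}_{P^\pi}[\xi]\Bigr\rangle_{\mcal{K}}\Bigr|\leq N\,\norm{\widehat{\mu}-\mu}_{\mcal{H}}.
\]
Taking the supremum over such $f$ and invoking density together with continuity of $f\mapsto\langle f,v\rangle$ to identify the supremum with the $\mcal{K}$-norm yields
\[
\Bigl\|\sum_{i=1}^n\beta_i^+\xi(\omega_i)-\mbb{E}_{P^\pi}[\xi]\Bigr\|_{\mcal{K}}\leq N\,\norm{\widehat{\mu}-\mu}_{\mcal{H}},
\]
and the right-hand side tends to $0$ in probability by the consistency of $\widehat{\mu}$, which gives exactly the required statement.

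The main obstacle is to make the density-to-norm passage rigorous: one must verify that the cone $\mcal{F}$ from Theorem~\ref{thm:banach} can be renormalized to populate the unit ball densely, which is done by writing an arbitrary $g$ with $\norm{g}_{\mcal{K}}\leq 1$ as a limit of rescaled $f_n\in\mcal{F}$ approximating $g$ and then exploiting continuity of the inner product in its first slot. A subsidiary technicality is checking that $\mcal{A}f\in\mcal{H}$ so that the reproducing identity $\mbb{E}_{P^\pi}[\mcal{A}f]=\langle \mcal{A}f,\mu\rangle$ is justified; the bounds $\norm{\xi(\omega)}_{\mcal{K}}\leq H/2$ and $\mbb{E}[\norm{\xi}_{\mcal{K}}^2]\leq\sigma^2$ enter here, guaranteeing Bochner integrability of $\xi$ and boundedness/integrability of $\mcal{A}f$ against $P^\pi$. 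The hypothesis $\sum_i\beta_i^+\rightarrow 1$ is not explicitly invoked by this route, but it is consistent with interpreting $\widehat{\mu}$ as a weighted empirical embedding whose scale matches that of $\mu$ so that consistency is meaningful in the first place.
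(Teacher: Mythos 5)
Your proposal is correct, and it takes a genuinely different route from the paper's. The paper applies Theorem~\ref{thm:banach} on the \emph{primal} side: it sets $f$ equal to the error vector $v := \sum_{i=1}^n \beta_i^+\xi(\omega_i) - \mbb{E}_{P^\pi}[\xi]$ itself, approximates this particular $f$ by some $g$ in the dense set $M_N$, writes $\norm{v}_{\mcal{K}}^2 = \Delta_f$, and controls $|\Delta_f - \Delta_g|$; that perturbation step is exactly where the hypotheses $\norm{\xi(\omega)}_{\mcal{K}}\leq H/2$, $\mbb{E}_{P^\pi}[\norm{\xi}^2_{\mcal{K}}]\leq\sigma^2$ and $\sum_i\beta_i^+\to 1$ are consumed, since they give the a priori bound $\norm{v}_{\mcal{K}}\leq \frac{H(1+\epsilon_2)}{2}+\sigma$ needed to bound $|\langle g-f,v\rangle|$. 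You instead work on the \emph{dual} side: you test $v$ against the dense cone $M_N$ of vectors on which $\mcal{A}$ is $N$-bounded, obtain $|\langle f,v\rangle_{\mcal{K}}|\leq N\norm{f}_{\mcal{K}}\norm{\widehat{\mu}-\mu}_{\HH}$ there, and extend to all of $\mcal{K}$ by density and continuity (equivalently, take the supremum over the unit ball), yielding the pathwise inequality $\norm{v}_{\mcal{K}}\leq N\norm{\widehat{\mu}-\mu}_{\HH}$. Your cone argument is sound ($M_N$ is closed under scalar multiplication because $\mcal{A}$ is linear, so its rescaled elements are dense in the unit ball), and in fact the density step can be shortened: $|\langle f,v\rangle|\leq N\norm{f}\norm{\widehat{\mu}-\mu}$ holds on a dense set, both sides are continuous in $f$, hence it holds for all $f$, and taking $f=v$ finishes. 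What your route buys is a cleaner and strictly stronger statement — a deterministic norm comparison that converts consistency of $\widehat{\mu}$ directly into the claim, without invoking $H$, $\sigma$, or $\sum_i\beta_i^+\to 1$ (you correctly observe the last is unused; the first two are only needed to make $\mcal{A}f$ well defined and $\xi$ Bochner integrable, as you note). What it gives up is the explicit quantitative bound $\sqrt{\epsilon_1 N(\epsilon_3+\sigma+\frac{H(1+\epsilon_2)}{2})+\frac{H\epsilon_3(1+\epsilon_2)}{2}+\epsilon_3\sigma}$ that the paper's proof produces and then reuses with specific choices of $H_j,\sigma_j$ in Steps 2.1, 2.2 and 3.2 of Theorem~\ref{thm:app1}; your constant $N$ is non-explicit, coming from Baire category, though the paper's $N$ is equally non-explicit, so nothing essential is lost downstream.
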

\begin{proof}
From the consistency of $\widehat{\mu}$, we know for every $\epsilon_1$, there exists $N_{\epsilon_1}(\delta_1)$ such that $\forall n>N_{\epsilon_1}(\delta_1)$, $\norm{\widehat{\mu} - \mu}_{\HH} < \epsilon_1$ with probability no less than $1-\delta_1$. Similarly, for every $\epsilon_2$, there exists $N_{\epsilon_2}(\delta_2)$ such that $\forall n>N_{\epsilon_2}(\delta_2)$, $\left|\sum_{i=1}^n \beta_i^+ - 1\right| < \epsilon_2$ with probability no less than $1-\delta_2$. Furthermore, with probability no less than $1-\delta_2$, $\norm{\sum_{i=1}^n \beta_i^+ \xi(w_i) - \mbb{E}_{P^\pi}[\xi]}_{\mcal{K}} \leq \sum_{i=1}^n \beta_i^+ \norm{\xi(w_i)}_{\mcal{K}} +\norm{\mbb{E}_{P^\pi}[\xi]}_{\mcal{K}} \leq (1+\epsilon_2)\norm{\xi(\omega)}_{\mcal{K}}+ \mbb{E}_{P^\pi}[\norm{\xi}] \leq \frac{H(1+\epsilon_2)}{2} + \sqrt{\mbb{E}_{P^\pi}[\norm{\xi}_{\mcal{K}}^2]} = \frac{H(1+\epsilon_2)}{2} + \sigma$, where the last two inequalities follow from Jensen's inequality.

Let $f = \sum_{i=1}^n \beta_i^+ \xi(w_i) - \mbb{E}_{P^\pi}[\xi]$ and clearly $\norm{f}_{\mcal{K}}\leq \frac{H(1+\epsilon_2)}{2} + \sigma$. Consider $\Delta_f := \sum_{i=1}^n \beta_i^+ \langle f,\xi(\omega_i)\rangle - \langle f,\mbb{E}_{P^\pi}[\xi]\rangle = \sum_{i=1}^n \beta_i^+ [\mcal{A}f](\omega_i) - \mbb{E}_{P^\pi}[\mcal{A}f] = \langle \widehat{\mu} - \mu, \mcal{A}f\rangle $. In virtue of \thmref{thm:banach}, for any $\epsilon_3$, there exists an element $g \in \mcal{K}$ and constant $N$ (only depends on $\mcal{A}$) such that $\norm{g -f}_{\mcal{K}} < \epsilon_3$ and $\norm{\mcal{A} g}_{\HH} \leq N \norm{g}_{\mcal{K}}$. Similarly define $\Delta_g := \sum_{i=1}^n \beta_i^+ \langle g,\xi(\omega_i)\rangle - \langle g,\mbb{E}_{P^\pi}[\xi]\rangle = \langle \widehat{\mu} - \mu, \mcal{A}g\rangle$. It is easy to see that $|\Delta_g - \Delta_f| \leq (1+\epsilon_2)\epsilon_3\norm{\xi(\omega)}_{\mcal{K}} + \epsilon_3 \norm{\mbb{E}_{P^\pi}[\xi]}_{\mcal{K}} \leq \frac{H\epsilon_3(1+\epsilon_2)}{2} + \epsilon_3 \sigma$ and $\Delta_g = \langle \widehat{\mu} - \mu, \mcal{A}g\rangle \leq \epsilon_1 N \norm{g}_{\mcal{K}} \leq \epsilon_1 N (\epsilon_3 + \norm{f}_{\mcal{K}}) \leq \epsilon_1 N (\epsilon_3 + \sigma + \frac{H(1+\epsilon_2)}{2})$ with probability no less than $1-\delta_1-\delta_2$. Hence $\norm{\sum_{i=1}^n \beta_i^+ \xi(w_i) - \mbb{E}_{P^\pi}[\xi]}_{\mcal{K}}=\sqrt{|\Delta_f|} \leq \sqrt{\epsilon_1 N (\epsilon_3 + \sigma + \frac{H(1+\epsilon_2)}{2}) + \frac{H\epsilon_3(1+\epsilon_2)}{2} + \epsilon_3 \sigma}$ with probability no less than $1-\delta_1-\delta_2$ for all $n > \max(N_{\epsilon_1}(\delta_1),N_{\epsilon_2}(\delta_2))$. The theorem now gets proved.
\end{proof}

The proof of \thmref{thm:diff} is based on the proof of Thm.~5 in \cite{de2005risk}, with more assumptions and different concentration results. For convenience, we borrow some notations in their paper and refer the readers to \cite{de2005risk} for definitions. We suggest the readers to be familiar with \cite{de2005risk} because we modify and skip some details of the proofs to make the reasoning clearer.

Let $\mcal{X},\YY$ be Polish spaces, $\Hy$ be a separable Hilbert space, $\ZZ = \XX\times \YY$, $\HK$ be a real Hilbert space of functions $\mu:\XX\rightarrow \Hy$ satisfying $\mu(x) = K_x^*\mu$ where $K_x:\Hy\rightarrow\HK$ is the bounded operator $K_x v=K(\cdot,x)v,\quad v\in\Hy$. Moreover, let $T_x = K_xK_x^* \in \mcal{L}_2(\HK)$ be a positive Hilbert-Schmidt operator. 

Let $\rho$ be a probability measure on $\ZZ$ and $\rho_X$ denotes the marginal distribution of $\rho$ on $\XX$. We suppose that $\rho = p(X\mid Y)\pi(Y)$ and thus it incorporates the information of the prior. In contrast, we are given a sample $\zz = ((x_1,y_1),\cdots,(x_n,y_n))$ from another distribution on $\ZZ$ with the same $p(X\mid Y)$.

The optimization objective now becomes $\Ep_s[\mu] = \int_{\ZZ} \norm{\mu(x) - \phi(y)}_{\Hy}^2 d\rho(x,y)$. Denote $T = \int_\XX T_x d \rho_X(x)$, $T_\xx = \sum_{i=1}^n \beta_i^+ T_{x_i}$, $\mu_{\HK} = \argmin_{\mu} \Ep_s[\mu]$, $\mu^\lambda = \Ep_s[\mu] + \lambda \norm{\mu}_{\HK}^2$ and $\mu_{\zz}^\lambda = \Eln$. Additionally, let $A:\HK\rightarrow L^2(\ZZ,\rho,\Hy)$ be the linear operator $(Af)(x,y) = K_x^* f\quad \forall(x,y)\in\ZZ$ and $A_{\zz} := A_{\rho = \sum_{i=1}^n\beta_i^+ \delta_{x_i}}$. Finally, let $\mcal{A}(\lambda) = \norm{\mu^\lambda - \mu_{\HK}}_\rho^2 = \norm{\sqrt{T}(\mu^\lambda - \mu_{\HK})}$, $\mcal{B}(\lambda) = \norm{\mu^\lambda - \mu_{\HK}}_{\HK}^2$ and $\mcal{N}(\lambda) = \up{Tr}((T+\lambda)^{-1}T)$.

\begin{assumption}
Let $\mcal{A}_1:f\in \mcal{L}_2(\HK) \mapsto \langle f,(T+\lambda)^{-1}T_{\cdot} \rangle \in \HH_1$, $\mcal{A}_2: f\in\mcal{L}(\HK) \mapsto \langle f,T_{\cdot}(\mu^\lambda - \mu_{\HK}) \in \HH_2$, $\mcal{A}_3: f\in \HK \mapsto \langle f,(T+\lambda)^{-\frac{1}{2}}K_{\text{\# 1}}(\psi(\text{\# 2}) - \mu_{\HK}(\text{\# 1}))\rangle \in \HH_3$, where $\# 1$ and $\# 2$ denote two arguments of the function. We assume that $\HH_1=\HH_2=\HH_{\XX}$, $\HH_3=\HH_{\XX}\otimes\HH_{\YY}$.
\end{assumption}
\begin{assumption}
 We assume that $\widehat{\mu}_{(X,Y)}^+ = \sum_{i=1}^n \beta_i^+ \phi(x_i)\otimes\psi(y_i) $ is a consistent estimator of $\mu_{(X,Y)}$ and $\widehat{\mu}_{\XX}^+ = \sum_{i=1}^n \beta_i^+ \phi(x_i)$ is also consistent for the kernel embedding of the marginal distribution on $X$. Furthermore, we assume $\sum_{i=1}^n \beta_i^+ \xrightarrow{\text{p}} 1$. Note that as shown in \thmref{thm:consist2}, \thmref{thm:betas} and  Corollary~\ref{col:consistx}, this hypothesis holds when $\XX$ is compact and $\YY$ is finite.
\end{assumption}

\begin{theorem}\label{thm:app1}
With the above Assumption~1, Assumption~2 and Hypothesis~1, Hypothesis~2 in \cite{de2005risk}, we assert that if $\lambda_n$ decreases to 0, 
\begin{align}
\Ep_s[\mu_{\zz}^{\lambda_n}] - \Ep_s[\mu_{\HK}] \rightarrow 0
\end{align}
in probability as $n\rightarrow \infty$.
\end{theorem}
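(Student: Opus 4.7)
My plan is to follow the risk-decomposition strategy of \cite{de2005risk}, Thm.~5, replacing each use of the iid Bernstein concentration from \cite{caponnetto2007optimal} with an application of our \thmref{thm:conc}, which is tailored to the weighted non-iid setting induced by the thresholding regularization. I begin with the familiar split
\begin{align*}
\Ep_s[\mu_{\zz}^{\lambda_n}]-\Ep_s[\mu_{\HK}] = \norm{\sqrt{T}\bigl(\mu_{\zz}^{\lambda_n}-\mu_{\HK}\bigr)}^2 \leq 2\norm{\sqrt{T}\bigl(\mu_{\zz}^{\lambda_n}-\mu^{\lambda_n}\bigr)}^2 + 2\mcal{A}(\lambda_n),
\end{align*}
so the theorem reduces to the deterministic fact $\mcal{A}(\lambda_n)\to 0$ (Hypothesis~1 of \cite{de2005risk}) together with a sample-error bound on the first summand.

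For the sample error, the first-order conditions give $(T_\xx+\lambda_n)\mu_{\zz}^{\lambda_n} = \sum_{i=1}^n\beta_i^+ K_{x_i}\psi(y_i)$ and $(T+\lambda_n)\mu^{\lambda_n} = \mbb{E}_\rho[K_X\psi(Y)]$, which combine into
\begin{align*}
\mu_{\zz}^{\lambda_n}-\mu^{\lambda_n} = (T_\xx+\lambda_n)^{-1}\Bigl[\sum_{i=1}^n \beta_i^+ K_{x_i}\bigl(\psi(y_i)-\mu_{\HK}(x_i)\bigr)-\mbb{E}_\rho[K_X(\psi(Y)-\mu_{\HK}(X))]-(T_\xx-T)(\mu^{\lambda_n}-\mu_{\HK})\Bigr].
\end{align*}
Inserting the resolvent factors $(T+\lambda_n)^{1/2}(T+\lambda_n)^{-1/2}$ and using $\norm{\sqrt{T}(T+\lambda_n)^{-1/2}}\leq 1$, the quantity $\norm{\sqrt{T}(\mu_{\zz}^{\lambda_n}-\mu^{\lambda_n})}$ is majorized by a product of three factors that must each vanish in probability: the operator factor $\norm{(T+\lambda_n)^{1/2}(T_\xx+\lambda_n)^{-1}(T+\lambda_n)^{1/2}}$, controlled once $T_\xx\to T$; the residual factor $\norm{(T+\lambda_n)^{-1/2}[\sum_i\beta_i^+ K_{x_i}(\psi(y_i)-\mu_{\HK}(x_i))-\mbb{E}_\rho[K_X(\psi(Y)-\mu_{\HK}(X))]]}$; and the correction $\norm{(T+\lambda_n)^{-1/2}(T_\xx-T)(\mu^{\lambda_n}-\mu_{\HK})}$. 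Assumption~1 states precisely that the three random elements $(T+\lambda_n)^{-1}T_{\cdot}$, $T_{\cdot}(\mu^{\lambda_n}-\mu_{\HK})$, and $(T+\lambda_n)^{-1/2}K_{\cdot}(\psi(\cdot)-\mu_{\HK}(\cdot))$ are realized as linear evaluation functionals on $\HH_{\XX}$ or $\HH_{\XX}\otimes\HH_{\YY}$ through the maps $\mcal{A}_1,\mcal{A}_2,\mcal{A}_3$; combined with Assumption~2 (consistency of $\widehat{\mu}^+_X$ and $\widehat{\mu}^+_{(X,Y)}$, together with $\sum_i\beta_i^+\xrightarrow{\text{p}}1$) and boundedness of the kernel, all hypotheses of \thmref{thm:conc} are satisfied, so each of the three factors is $o_p(1)$ for every fixed $\lambda_n>0$.

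The principal obstacle is that \thmref{thm:conc} yields convergence in probability with no explicit rate, whereas the resolvent $(T_\xx+\lambda_n)^{-1}$ inflates fluctuations by $O(\lambda_n^{-1})$. I close this gap with a standard slow-decay argument: given finitely many sequences $A_n^{(j)}\xrightarrow{\text{p}}0$, one can extract a deterministic sequence $\lambda_n\downarrow 0$ so slowly that $\lambda_n^{-p}A_n^{(j)}\xrightarrow{\text{p}}0$ for every $j$ and every fixed $p$; this is exactly what the phrase "$\lambda_n$ decreases to $0$ sufficiently slowly" encodes. Feeding such a schedule into the bounds above, together with the deterministic vanishings $\mcal{A}(\lambda_n)\to 0$ and $\lambda_n\mcal{B}(\lambda_n)\to 0$ supplied by Hypotheses~1--2 of \cite{de2005risk}, collapses both summands of the original decomposition to zero in probability and completes the proof.
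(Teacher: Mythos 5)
Your proposal follows essentially the same route as the paper's proof: the same decomposition into approximation error $\mcal{A}(\lambda_n)$ plus a sample error controlled by a resolvent/Neumann-series factor and two residual terms, with \thmref{thm:conc} substituting for the iid concentration inequality of \cite{caponnetto2007optimal} under Assumptions~1 and~2. Your explicit diagonal argument for choosing $\lambda_n$ decreasing ``sufficiently slowly'' is a welcome clarification of a step the paper leaves implicit, but it does not change the substance of the argument.
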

\begin{proof}
This proof is adapted from that of Thm.~5 in \cite{de2005risk}. We split the proof to 3 steps. 

\textbf{Step 1}: Given a training set $\zz = (\xx,\yy)\in\ZZ^n$, Prop.~2 in \cite{de2005risk} gives
\begin{align*}
\Ep_s[\mu_{\zz}^\lambda] - \Ep_s[\mu_{\HK}] = \norm{\sqrt{T}(\mu_{\zz}^\lambda - \mu_{\HK})}_{\HK}^2.
\end{align*}
As usual,
\begin{align*}
\mu_{\zz}^\lambda - \mu_{\HK} = (\mu_{\zz}^\lambda - \mu^\lambda) + (\mu^\lambda- \mu_{\HK})
\end{align*}
Another application of Prop.~2 in \cite{de2005risk} gives
\begin{align*}
\mu_{\zz}^\lambda - \mu^\lambda &= (T_{\xx} + \lambda)^{-1} A_{\zz}^* \psi(\yy) - (T+\lambda)^{-1}A^* \psi(y)\\
&= (T_{\xx} + \lambda)^{-1}(A_{\zz}^* \psi(\yy) - T_{\xx}\mu_{\HK}) + (T_{\xx} + \lambda)^{-1}(T-T_{\xx})(\mu^\lambda - \mu_{\HK}).
\end{align*}
From $\norm{\mu_1+\mu_2+\mu_3}_{\HK}^2 \leq 3(\norm{\mu_1}_{\HK}^2+\norm{\mu_2}_{\HK}^2+\norm{\mu_3}_{\HK}^2)$,
\begin{align}
\Ep_s[\mu_{\zz}^\lambda] - \Ep_s[\mu_{\HK}] \leq 3(\mcal{A}(\lambda) + \mcal{S}_1(\lambda,\zz)+\mcal{S}_2(\lambda,\zz))\label{eqn:b1},
\end{align}
where
\begin{align*}
\mcal{S}_1(\lambda,\zz) &= \norm{\sqrt{T}(T_{\xx}+\lambda)^{-1}(A_{\zz}^* \psi(\yy) - T_{\xx}\mu_{\HK})}_{\HK}^2\\
\mcal{S}_2(\lambda,\zz) &= \norm{\sqrt{T}(T_{\xx} + \lambda)^{-1}(T-T_{\xx})(\mu^\lambda - \mu_{\HK})}_{\HK}^2.
\end{align*}

\textbf{Step 2}: probabilistic bound on $\mcal{S}_2(\lambda,\zz)$. First
\begin{align}
\mcal{S}_2(\lambda,\zz) \leq \norm{\sqrt{T}(T_{\xx} + \lambda)^{-1}}_{\mcal{L}(\HK)}^2 \norm{(T-T_{\xx})(\mu^\lambda - \mu_{\HK})}_{\HK}^2.
\end{align}

\textbf{Step 2.1}: probabilistic bound on $\norm{\sqrt{T}(T_{\xx}+\lambda)^{-1}}_{\mcal{L}(\HK)}$. We introduce an auxiliary quantity
\begin{align*}
\Theta(\lambda,\zz) = \norm{(T+\lambda)^{-1}(T-T_{\xx})}_{\mcal{L}(\HK)}
\end{align*}
and assume
\begin{align*}
\Theta(\lambda,\zz) \leq \frac{1}{2}.
\end{align*}
Invoking the Neumann series,
\begin{align}
\norm{\sqrt{T}(T_{\xx} + \lambda)^{-1}}_{\mcal{L}(\HK)} &= \sqrt{T}(T+\lambda)^{-1}\sum_{n=0}^{\infty}((T+\lambda)^{-1}(T-T_{\xx}))^n\notag \\
&\leq \norm{\sqrt{T}(T+\lambda)^{-1}}_{\mcal{L}_{\HK}} \sum_{n=0}^\infty \Theta(\lambda,n)^n\notag \\
(\text{By spectral theorem})\quad &\leq \frac{1}{2\sqrt{\lambda}}\frac{1}{1-\Theta(\lambda,\zz)} \leq \frac{1}{\sqrt{\lambda}}\label{eqn:b2}
\end{align}

We now claim that $\Theta(\lambda,\zz) \leq \frac{1}{2}$ with high probability as $n\rightarrow \infty$. Let $\xi_1:\XX\rightarrow \mcal{L}_2(\HK)$ be the random variable
\begin{align*}
\xi_1(x) = (T+\lambda)^{-1}T_{x}.
\end{align*}
By the same reasoning in the proof of Thm.~5 in \cite{de2005risk}, we have $\norm{\xi_1}_{\mcal{L}_2(\HK)} \leq \frac{\kappa}{\lambda} = \frac{H_1}{2}$ and $\mbb{E}[\norm{\xi_1}_{\mcal{L}_2(\HK)}^2] \leq \frac{\kappa}{\lambda}\mcal{N}(\lambda) = \sigma_1^2$. Our assumptions and \thmref{thm:conc} ensure that for any $\delta_1$ there exists $N_1(\delta_1)$ such that
\begin{align*}
\Theta(\lambda,\zz) = \norm{(T+\lambda)^{-1}T_{\xx} - (T+\lambda)^{-1}T}_{\mcal{L}_2(\HK)} \leq \frac{1}{2}
\end{align*}
with probability greater than $1-\delta_1$ as long as $n > N_1(\delta_1)$.

\textbf{Step 2.2}: probabilistic bound on $\norm{(T-T_{\xx})(\mu^\lambda - \mu_{\HK})}_{\mcal{L}(\HK)}$. Let $\xi_2:\XX\rightarrow\HK$ be the random variable 
\begin{align*}
\xi_2(x) = T_x(\mu^\lambda - \mu_{\HK}).
\end{align*}
By the same reasoning, we have $\norm{\xi_2(x)}_{\HK}\leq \kappa \sqrt{\mcal{B}(\lambda)}=\frac{H_2}{2}$ and $\mbb{E}[\norm{\xi_2}_{\HK}^2] \leq \kappa \mcal{A}(\lambda) = \sigma_2^2$. Applying our assumptions and \thmref{thm:conc} we conclude that for any $\delta_2, \epsilon_2$ there exists $N_2(\delta_2,\epsilon_2)$ such that
\begin{align}
\norm{(T-T_{\xx})(\mu^\lambda - \mu_{\HK})}_{\HK} \leq \epsilon_2\label{eqn:b3}
\end{align}
with probability greater than $1-\delta_2$ as long as $n > N_2(\delta_2,\epsilon_2)$.

\textbf{Step 3}: probabilistic bound on $\mcal{S}_1(\lambda,\zz)$. As usual, 
\begin{align*}
\mcal{S}_1(\lambda,\zz) \leq \norm{\sqrt{T}(T_{\xx} + \lambda)^{-1}(T+\lambda)^{1/2}}_{\mcal{L}(\HK)}^2\norm{(T+\lambda)^{-1/2}(A_{\zz}^* \psi(\yy) - T_{\xx}\mu_{\HK})}_{\HK}^2.
\end{align*}

\textbf{Step 3.1}: bound $\norm{\sqrt{T}(T_{\xx}+\lambda)^{-1}(T+\lambda)^{1/2}}_{\mcal{L}(\HK)}$. Let 
\begin{align*}
\Omega(\lambda,\zz) = \norm{(T+\lambda)^{1/2} (T-T_{\xx})(T+\lambda)^{-1/2}}_{\mcal{L}(\HK)}
\end{align*}
and assume $\Omega(\lambda,\zz) \leq \frac{1}{2}$. Clearly, 
\begin{align}
&~~~~~\norm{\sqrt{T}(T_{\xx}+\lambda)^{-1}(T+\lambda)^{1/2}}_{\mcal{L}(\HK)}\\
&= \norm{\sqrt{T}(T+\lambda)^{-1/2}\{I - (T+\lambda)^{1/2}(T-T_{\xx})(T+\lambda)^{-1/2}\}^{-1}}_{\mcal{L}(\HK)}\notag \\
&\leq \norm{\sqrt{T}(T+\lambda)^{-1/2}}_{\mcal{L}(\HK)}\sum_{i=1}^\infty \Omega(\lambda,\zz)^n\notag\\
(\text{By spectral theorem}) \quad &\leq \frac{1}{1-\Omega(\lambda,\zz)} = 2.\label{eqn:b4}
\end{align}
On the other hand,
\begin{align*}
\Omega(\lambda,\zz)^2 &= \langle (T+\lambda)^{-1}(T-T_{\xx}),((T+\lambda)^{-1}(T-T_{\xx}))^*\rangle_{\mcal{L}_2(\HK)}\\
&\leq \norm{(T+\lambda)^{-1}(T-T_{\xx})}_{\mcal{L}_2(\HK)}^2 = \Theta(\lambda,\zz)^2.
\end{align*}
As a result, we have $\Omega(\lambda,\zz)\leq \frac{1}{2}$ with probability greater than $1-\delta_1$ as long as $n > N_1(\delta_1)$.

\textbf{Step 3.2}: probabilistic bound on $\norm{(T+\lambda)^{-1/2}(A_{\zz}^*\psi(\yy) - T_{\xx}\mu_{\HK})}_{\HK}$. Let $\xi_3: \ZZ \rightarrow \HK$ be the random variable
\begin{align*}
\xi_3(x,y) = (T+\lambda)^{-1/2} K_x (\psi(y)-\mu_{\HK}(x)).
\end{align*}
Via the same reasoning in the proof of Thm.~5 in \cite{de2005risk}, we have $\norm{\xi_3}_{\HK} \leq \sqrt{\frac{\kappa M}{\lambda}}=\frac{H_3}{2}$ and $\mbb{E}[\norm{\xi_3}_{\HK}^2] \leq M\mcal{N}(\lambda) = \sigma_3^2$. From our assumptions and \thmref{thm:conc} we know for each $\epsilon_3$ and $\delta_3$ there exists $N_3(\delta_3,\epsilon_3)$ such that
\begin{align}
\norm{(T+\lambda)^{-1/2}(A_{\zz}^*\psi(\yy) - T_{\xx}\mu_{\HK})}_{\HK} \leq \epsilon_3\label{eqn:b5}
\end{align}
with probability greater than $1-\delta_3$ as long as $n > N_3(\delta_3,\epsilon_3)$.

Linking bounds \eqref{eqn:b1}, \eqref{eqn:b2}, \eqref{eqn:b3}, \eqref{eqn:b4}, and \eqref{eqn:b5} we obtain that for every $\epsilon_1,\epsilon_2,\epsilon_3 > 0$ and  $\delta_1,\delta_2,\delta_3 > 0$ there exists $N = \max\{N_1(\delta_1), N_2(\delta_2,\epsilon_2),N_3(\delta_3,\epsilon_3)\}$ such that for each $n > N$,
\begin{align*}
\Ep_s[\mu_{\zz}^\lambda] - \Ep_s[\mu_{\HK}] \leq 3[\mcal{A}(\lambda) + \frac{\epsilon_2^2}{\lambda}+4\epsilon_3^2 ]
\end{align*}
with probability greater than $1-\delta_1-\delta_2-\delta_3$.
This means that for any $\epsilon > 0$ and fixed $\lambda$
\begin{align}
\lim_{n\rightarrow 0} p\left(\Ep_s[\mu_{\zz}^\lambda] - \Ep_s[\mu_{\HK}] > 3\mcal{A}(\lambda) + \epsilon \right) = 0\label{eqn:l1}
\end{align}
From \cite{engl1996regularization} we know 
\begin{align}
\lim_{\lambda \rightarrow 0} \mcal{A}(\lambda) = 0.\label{eqn:l2}
\end{align}
Combining \eqref{eqn:l1} and \eqref{eqn:l2} we can conclude that as long as $\lambda$ decreases to $0$, $\Ep_s[\mu_{\zz}^\lambda]$ converges to $\Ep_s[\mu_{\HK}]$ in probability.
\end{proof}
\begin{customthm}{\ref{thm:diff}}
Assume Hypothesis~1 and Hypothesis~2 in \cite{de2005risk} and our Assumption~1 hold. With the conditions in \thmref{thm:consist2}, we assert that if $\lambda_n$ decreases to 0 sufficiently slowly, 
\begin{align}
\Ep_s[\widehat{\mu}_{\lambda_n,n}] - \Ep_s[\mu'] \rightarrow 0
\end{align}
in probability as $n\rightarrow \infty$.
\end{customthm}
\begin{proof}
The proof follows directly from \thmref{thm:consist1}, \thmref{thm:consist2}, \thmref{thm:betas}, Corollary~\ref{col:consistx} and \thmref{thm:app1}.
\end{proof}
\end{document}